\documentclass[preprint]{article}

\usepackage[main]{neurips_2025}

\title{Reliable Unlearning Harmful Information in LLMs with Metamorphosis Representation Projection}

\usepackage[table]{xcolor}
\usepackage[utf8]{inputenc} 
\usepackage[T1]{fontenc}   
\usepackage[pagebackref,breaklinks,colorlinks,citecolor=blue]{hyperref}
\usepackage{url}         
\usepackage{booktabs}       
\usepackage{amsfonts}    
\usepackage{nicefrac}    
\usepackage{microtype}
\usepackage[most]{tcolorbox}
\usepackage{makecell} 
\usepackage{subcaption} 
\usepackage{wrapfig}

\usepackage{marvosym} 

\usepackage{amsmath,amssymb} 
\usepackage{graphicx} 
\usepackage{subcaption} 

\usepackage{multirow} 
\usepackage{float} 

\usepackage{algorithm}
\usepackage{algorithmic}

\usepackage{enumitem} 
\usepackage{amsthm}

\newtheorem{theorem}{theorem}[section]  
\newtheorem{lemma}[theorem]{lemma}

\newtheorem{corollary}{corollary}[theorem]
\newtheorem{definition}{Definition}[section]

\author{
Chengcan Wu${}^1$\quad
Zeming Wei${}^{1*}$\quad
Huanran Chen${}^2$\quad
Yinpeng Dong${}^2$\quad
Meng Sun${}^1$\thanks{Correspondence to Zeming Wei (\texttt{weizeming@stu.pku.edu.cn}) and Meng Sun (\texttt{sunm@pku.edu.cn}).}
\vspace{10pt}\\
${}^{1}$Peking University\qquad${}^{2}$Tsinghua University
}

\begin{document}

\maketitle

\begin{abstract}
While Large Language Models (LLMs) have demonstrated impressive performance in various domains and tasks, concerns about their safety are becoming increasingly severe. In particular, since models may store unsafe knowledge internally, machine unlearning has emerged as a representative paradigm to ensure model safety. Existing approaches employ various training techniques, such as gradient ascent and negative preference optimization, in attempts to eliminate the influence of undesired data on target models. However, these methods merely suppress the activation of undesired data through parametric training without completely eradicating its informational traces within the model. This fundamental limitation makes it difficult to achieve effective continuous unlearning, rendering these methods vulnerable to relearning attacks. To overcome these challenges, we propose a Metamorphosis Representation Projection (MRP) approach that pioneers the application of irreversible projection properties to machine unlearning. By implementing projective transformations in the hidden state space of specific network layers, our method effectively eliminates harmful information while preserving useful knowledge. Experimental results demonstrate that our approach enables effective continuous unlearning and successfully defends against relearning attacks, achieving state-of-the-art performance in unlearning effectiveness while preserving natural performance. Our code is available in \url{https://github.com/ChengcanWu/MRP}.
\end{abstract}

\section{Introduction}
Recently, with the increasing capacity of Large Language Models (LLMs) to learn from vast corpora, growing concerns have emerged regarding their potential to generate private, harmful, or illegal content~\cite{pan2020privacy,nasr2025scalable,jang2022knowledge}. In response, regulatory frameworks such as the EU's General Data Protection Regulation (GDPR)~\cite{team2025eu} and the California Consumer Privacy Act (CCPA)~\cite{CCPA2018} have established the ``right to be forgotten'', mandating that applications must support the deletion of specific information upon user request. This has spurred significant research interest in machine unlearning~\cite{cao2015towards,bourtoule2021machine,gupta2021adaptive} to address these challenges.

So far, existing LLM unlearning approaches primarily focus on parameter optimization. Some methods formulate loss functions to achieve model unlearning~\cite{yao2024large,eldan2023s,zhang2024negative,jia2024soul,li2024wmdp}, while others modify model architectures by incorporating unlearning layers or classifiers to enhance unlearning performance~\cite{chen2023unlearn,gao2024large}. However, recent research~\cite{liu2025rethinking,shumailov2024ununlearning} reveals that even when unlearning appears to be successful, implicit knowledge may still persist within model parameters, showing the superficiality of existing unlearning methods. In particular, attackers can intentionally recover sensitive information through relearning or jailbreaking attacks. For instance, some adversaries employ adversarial attacks like GCG~\cite{zou2023universal} to enhance the prompts to elicit the model's harmful content generation~\cite{lucki2024adversarial}. Others fine-tune open-source models using benign data similar to the unlearned data~\cite{hu2024jogging,lynch2024eight}, effectively recovering forgotten knowledge. The success of these attacks demonstrates that existing unlearning methods struggle to completely eliminate knowledge-related information from models.

Furthermore, most current unlearning methods handle only a single unlearning request. In practice, however, unlearning requests are sequential~\cite{lee2024protecting,shi2024muse}, as original training data may become inaccessible over time due to expired access rights, privacy concerns, or intellectual property protection~\cite{liu2025rethinking,li2024survey}. However, existing unlearning methods suffer from catastrophic forgetting in continuous unlearning scenarios~\cite{zhang2023machine}. Our experiments confirm this phenomenon and further provide an intuitive explanation: subsequent unlearn tasks may restore certain model parameters, reactivating their forgotten knowledge.

\begin{figure*}[t!]
\centering
\includegraphics[width=0.9\textwidth]{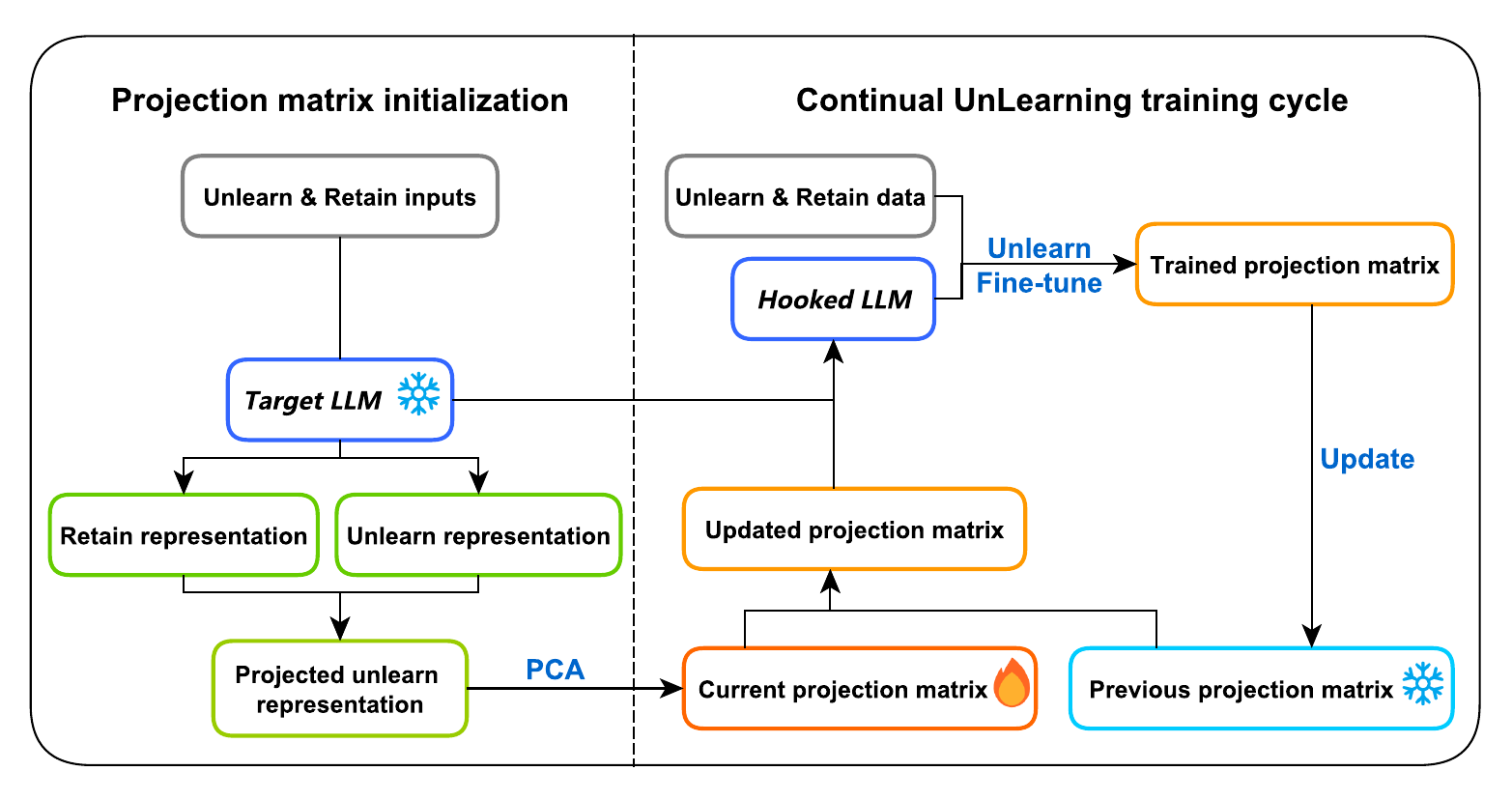} 
\caption{A brief overview of MRP. Our method consists of two key components: Projection Matrix Initialization and Continual Unlearning Training Cycle. For each unlearn data, we first feed both unlearn and retain inputs into the target LLM to extract their hidden state vectors as representations. The unlearning representations are then projected onto the orthogonal complement space of the retention representations. Through PCA, we derive the initial projection matrix, which is combined with the previous projection matrix and integrated into the target LLM to form a hooked LLM. Finally, we perform unlearning fine-tuning on the combined projection matrix using both unlearn and retain data. The trained projection matrix updates the previous projection matrix iteratively, enabling continuous unlearning through successive inputs of unlearn/retain data pairs.}
\label{fig:overview}
\end{figure*}

To overcome these challenges, we propose a Metamorphosis Representation Projection (MRP) method that applies projection operations to the model's hidden state vectors. Specifically, we introduce projection matrices after the MLP layers to erase forgotten information from hidden states while ensuring minimal impact on other useful tasks, as illustrated in Figure~\ref{fig:overview}. Experiments demonstrate that our approach achieves strong unlearning effects with minimal parameter updates. Due to the compatibility of projection operations, even if the already projected vector is projected again, the forgotten information will not be restored, making our method particularly effective in continuous unlearning scenarios. Remarkably, by training only 0.1M parameters, our method maintains an unlearning performance score of 0.905 after completing four unlearn tasks, significantly outperforming the best baseline score of 0.785. Moreover, the irreversible nature of projections ensures that once hidden states are modified, unlearned information remains robust against relearning attacks. This property enables our method to maintain an accuracy of 0.383 on unlearn tasks even after five epochs of relearning attacks, while other baselines exhibit minimal accuracy increases to 0.506 on the same tasks.

Our key contributions are summarized as follows:
\begin{itemize}
    \item We identify limitations in existing unlearning methods and empirically demonstrate that both relearning attacks and continuous unlearning can lead to parameter recovery, reactivating forgotten knowledge.
    \item We propose a hidden state projection approach, experimentally validating its effectiveness and robustness in removing task-specific information from hidden states.
    \item Our method achieves superior performance in continuous unlearning scenarios and provides strong defense against relearning attacks, addressing a critical challenge in LLM unlearning.
\end{itemize}

\section{Related Work}

\subsection{Machine Unlearning}
Machine unlearning was first introduced by~\cite{cao2015towards}, with initial applications focusing on compliance with regulatory ``right to be forgotten'' requirements and eliminating retained knowledge that models should not possess~\cite{zhang2024right,zhang2024forgotten}. Significant progress has been made in unlearning across various domains, including image classification~\cite{gandikota2023erasing,fan2023salun}, text-to-image generation~\cite{gandikota2023erasing,fan2023salun}, Healthcare AI~\cite{sakib2024machine,kumar2025machine,alvandi2024machine}, blockchain~\cite{zuo2025federated,lin2024blockchain,zuo2024federated}, and graph neural networks~\cite{chen2023unlearn,chien2022efficient,wu2023certified}.

\subsection{Language Model Unlearning}
Despite advances in unlearning techniques, unlearning in LLMs remains particularly challenging due to their internal complex mechanisms. The enormous parameter space of LLMs makes the precise targeting of unlearning objectives difficult. Existing representative approaches include Gradient Ascent (GA)~\cite{yao2024large} fine-tunes models by performing gradient ascent on unlearn data while applying gradient descent on retain data, Efficient Unlearning method for LLMs (EUL)~\cite{chen2023unlearn} introduces an additional unlearning layer after MLP blocks, Negative Preference Optimization (NPO)~\cite{zhang2024negative} balances unlearning and retention through a tuning parameter $\beta$, and Representation Misdirection for Unlearning (RMU)~\cite{li2024wmdp} perturbs hidden states of unlearned data while preserving original representations for retained data.

However, these methods exhibit two critical limitations: (1) They suffer from catastrophic forgetting when handling sequential unlearning requests, compromising performance on earlier tasks, and (2) They only superficially suppress model outputs without robustly erasing the internal unlearned knowledge representations.

Recent work by~\cite{gao2024large} proposed the first systematic continuous unlearning framework using Orthogonal Low-Rank Adaptation (O-LoRA)~\cite{wang2023orthogonal} to maintain task independence, combined with an Out-of-Distribution (OOD) detection module to identify data requiring unlearning. While this approach shows improved continuous unlearning performance, it introduces substantial computational overhead for OOD training and suffers from false positives when processing data similar to unlearned examples. Crucially, it still fails to completely eliminate internal knowledge, leaving models vulnerable to relearning attacks.

\subsection{Model Representation}
Our work builds on model representation analysis~\cite{zou2023representation}. In large language models, representations refer to the distributed patterns of neuron activations that encode specific linguistic features or task knowledge across network layers. These representations are typically extracted through forward propagation of input data and analysis of hidden state vectors at targeted layers. Recent studies demonstrate that task-specific representations in LLMs tend to be sparse and low-rank~\cite{cunningham2023sparse}. While some unlearning methods leverage representation manipulation~\cite{li2024wmdp} by controlling post-unlearning similarity to original representations, they underutilize these sparsity and low-rank properties. 

Our key insight addresses this gap: By extracting orthogonal subspaces of model representations for unlearned data and training projection matrices in these subspaces, we achieve precise elimination of target representations. The inherent sparsity and low-rank structure ensure minimal impact on other tasks' performance while enabling continuous unlearning. Furthermore, the irreversible nature of our projections provides robust defense against relearning attacks, offering a novel solution to fundamental challenges in unlearning.

\section{Motivation and Preliminaries}
\subsection{Motivation}
Most existing model unlearning studies employ fine-tuning approaches that treat unlearning as a learning task, where model parameters are adjusted using both unlearn and retain datasets. This methodology, however, is prone to catastrophic forgetting. To substantiate this claim, we conducted preliminary experiments using LLama2-7B~\cite{touvron2023llama} as our target model and selected three consecutive unlearn tasks from the ScienceQA~\cite{lu2022learn} dataset's natural science module: biology (Task 1), chemistry (Task 2), and physics (Task 3).

As shown in Table~\ref{tab:pre-experiment}, when applying the traditional Gradient Ascent (GA) method to Task 1, we observed satisfactory unlearning performance with the QA accuracy dropping from 0.640 to 0.363. However, when proceeding to unlearn Tasks 2 and 3, the unlearning effectiveness on Task 1 deteriorated significantly, with QA accuracy rebounding to 0.411 and eventually 0.447.

To further investigate this phenomenon, we performed two gradient analysis experiments:
\begin{itemize}
    \item For the model that has unlearned on Task 1, we extracted average layer-wise gradients ($G_{unlearn1}$, $G_{unlearn2}$) when processing Tasks 1 and 2, respectively;
    \item For the model that has unlearned on Tasks 1 and 2, we extracted gradients ($G_{unlearn1}'$, $G_{unlearn3}$) when processing Tasks 1 and 3.
\end{itemize}

As shown in Figure~\ref{fig:pre-experiment-cos}(a), most layers exhibit negative cosine similarity between $G_{unlearn1}$ and $G_{unlearn2}$, indicating fundamental conflicts between Task 1 and Task 2 unlearn objectives. This explains why unlearn Task 2 partially reverses Task 1's unlearning effects.Figure~\ref{fig:pre-experiment-cos}(b) demonstrates even stronger conflicts between $G_{unlearn1}'$ and $G_{unlearn3}$, with lower cosine similarity values. This suggests that unlearning interference becomes more severe as the model undergoes consecutive unlearning operations.

\begin{table}[h]
\centering
\begin{tabular}{lllll}
\toprule
Unlearn progress& Origin& Task 1& Task 2& Task 3\\
\midrule
QA Accuracy& 0.640& 0.363& 0.411& 0.447\\
\bottomrule
\end{tabular}
\caption{Performance of the model on Task 1 during the continuous unlearning process}
\label{tab:pre-experiment}
\end{table}

\begin{figure*}[h]
    \centering
    \begin{tabular}{ccc}    
    \includegraphics[width=0.45\textwidth]{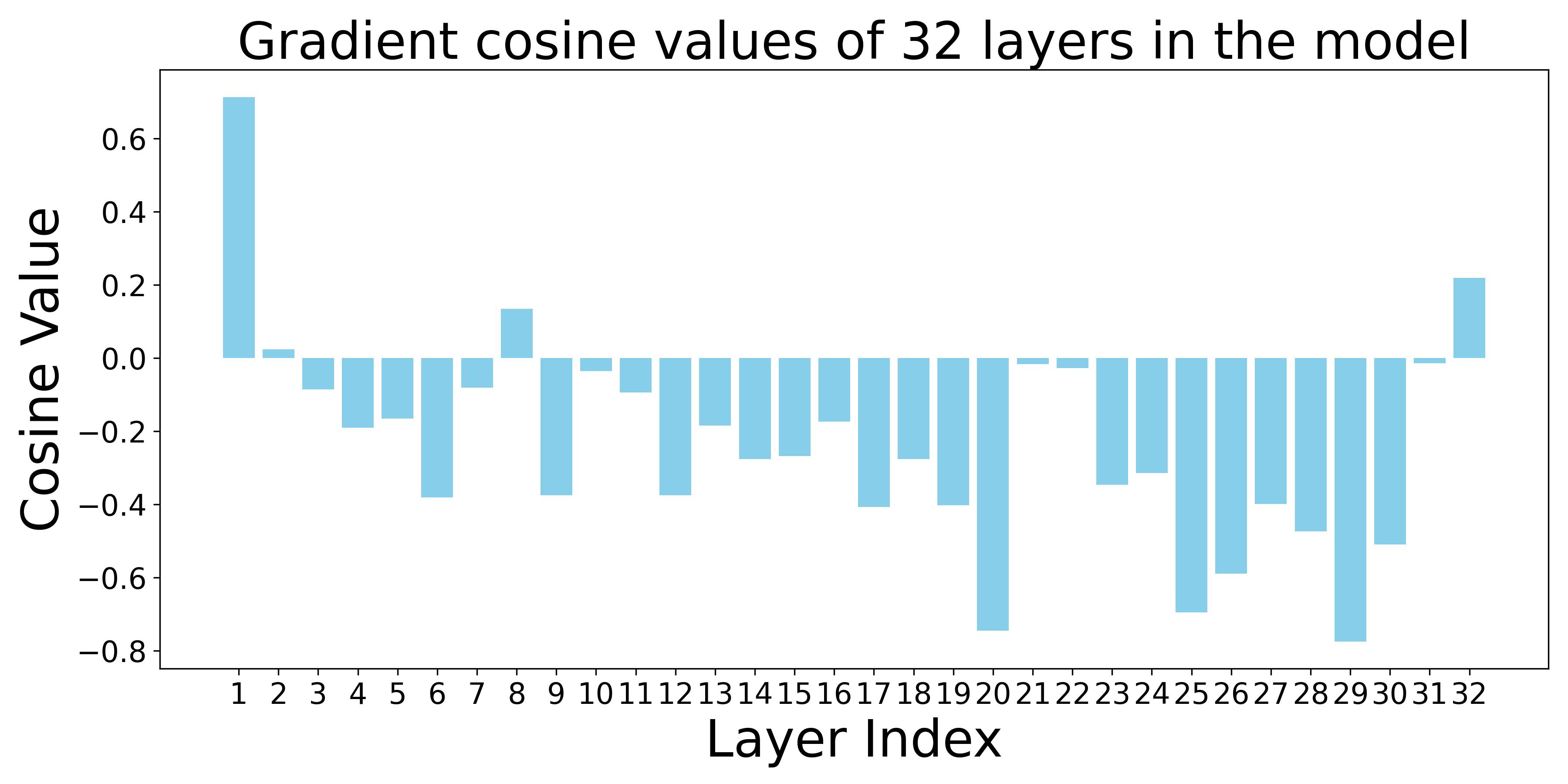}&
    \includegraphics[width=0.45\textwidth]{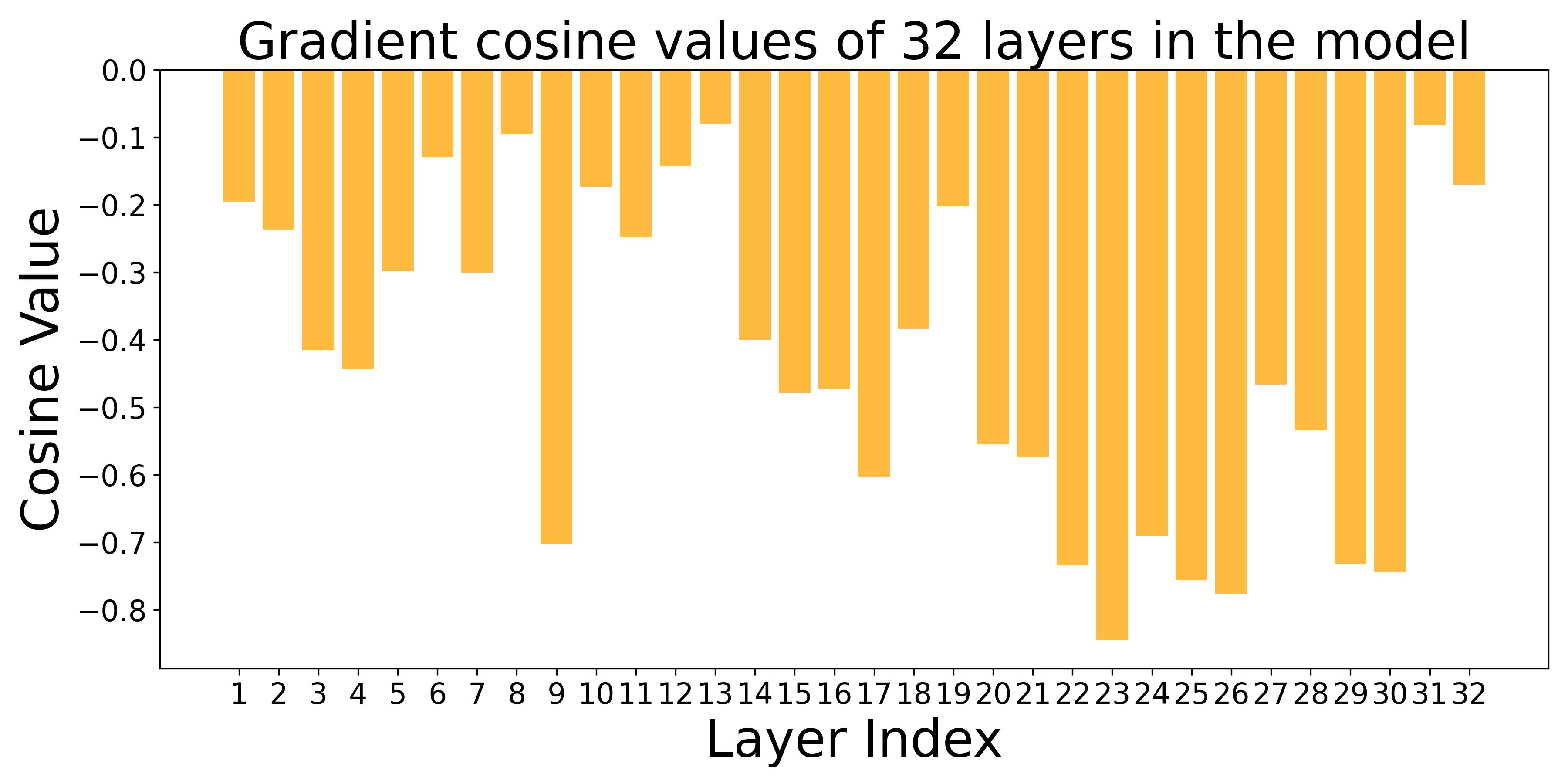} \\
    \begin{minipage}{0.45\textwidth}
        \centering (a) The cosine similarity between $G_{unlearn1}$ and $G_{unlearn2}$
    \end{minipage} &
    \begin{minipage}{0.45\textwidth}
        \centering (b) The cosine similarity between $G_{unlearn1}'$ and $G_{unlearn3}$
    \end{minipage}
    \end{tabular}
    \caption{Layer-wise gradient cosine similarity during unlearning across two tasks}
    \label{fig:pre-experiment-cos}
\end{figure*}

Motivated by observations above, we propose a method capable of completely unlearning Task A while ensuring that the unlearning process for Task B does not interfere with the unlearning of Task A. Instead of overwriting the parameters with a new set of fine-tuned parameters, we conceptualize unlearn Task A as eliminating the information related to Task A from the parameters. To achieve this, we only need to project the model parameters orthogonally to the subspace associated with Task A, effectively removing the information pertaining to Task A from the parameters. Similarly, even if a similar projection operation is performed for Task B, the projected model parameters will remain orthogonal to Task A, thereby minimizing the impact on the unlearning performance for Task A. Overall, this procedure can be viewed as a metamorphosis representation projection (MRP) mechanism.

\subsection{Preliminaries}

\noindent\textbf{Model architectures}. We formulate the layer-wise components in LLMs as follows. Generally, a (decoder-only) LLM can be formulated as $f^{W} = g \circ l_T\circ  \dots \circ l_2 \circ l_1 \circ f$, where blocks $\{l_i\}_{i=1}^T$ represent successive layers of the model, consisting of attention modules and MLP modules, $f$ and $g$ denote the encoding and decoding operations respectively, $W$ denotes all parameters of the model.

\noindent\textbf{Parameter Projection}. To reduce computational resources while demonstrating the efficiency of our approach, we selectively apply projection after certain MLP layers rather than all layers. Let $H$ denote the set of MLP layers that require projection. For layers not in $H$, we consider their projection transformation as an identity transformation, corresponding to an identity matrix as the projection matrix. 

Formally, let $f^{W,P}$ be the modified model after projection. Let  Then:
\begin{equation}
f^{W,P} = g \circ P_n \circ l_n \circ P_{n-1} \circ l_{n-1} \circ \cdots \circ P_1 \circ l_1 \circ f
\end{equation}
where for $h \notin H$, $P_h = I$ (the identity matrix).

After obtaining the task specific representation, we need to find a projection transformation to eliminate the information of the model on the task while not affecting its performance in the retain task, we can achieve this by utilizing the following properties of the projection matrix.

\begin{lemma}
\label{lem:projection1}
Let $P \in \mathbb{R}^{n \times n}$ be a projection matrix. Then there exists an orthogonal matrix $Q \in \mathbb{R}^{k \times n}$ (with $k \leq n$) such that:$P = I - Q^TQ$.
\end{lemma}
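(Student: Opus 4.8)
The plan is to read the statement with the interpretations that the paper's geometric setup forces. Since the method projects representations \emph{orthogonally} onto a subspace, I will take ``projection matrix'' to mean an orthogonal projection, i.e.\ $P$ satisfies $P^2 = P$ and $P^\top = P$; and since $Q \in \mathbb{R}^{k \times n}$ is rectangular, ``orthogonal'' must mean that $Q$ has orthonormal rows, $Q Q^\top = I_k$. Under these readings the identity $P = I - Q^\top Q$ is exactly the statement that $I - P$ is the orthogonal projection onto the row space of $Q$, so the whole task reduces to exhibiting an orthonormal basis of the range of the complementary projection $I - P$.

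First I would record the spectral structure of $P$. Because $P$ is symmetric and idempotent, the spectral theorem gives an orthogonal diagonalization with eigenvalues confined to $\{0,1\}$; writing $r = \operatorname{rank} P$ and $k = n - r$, the matrix $I - P$ is again symmetric and idempotent, hence an orthogonal projection, and its range is precisely $\ker P$, of dimension $k$. Next I would construct $Q$ directly from this range. Choosing an orthonormal basis $q_1,\dots,q_k$ of $\operatorname{range}(I-P)$ and stacking the $q_i^\top$ as the rows of $Q \in \mathbb{R}^{k \times n}$ gives $Q Q^\top = I_k$ by orthonormality, so $Q$ is orthogonal in the required sense.

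It then remains to check $Q^\top Q = I - P$: the matrix $Q^\top Q = \sum_{i=1}^k q_i q_i^\top$ is the orthogonal projection onto $\operatorname{span}\{q_1,\dots,q_k\} = \operatorname{range}(I-P)$, and an orthogonal projection is uniquely determined by its range, so it coincides with $I - P$. Rearranging yields $P = I - Q^\top Q$, as claimed.

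I expect the only real obstacle to be conceptual rather than computational, namely fixing the correct reading of the hypotheses. In particular one should note that the conclusion implicitly \emph{requires} $P$ to be an orthogonal (not merely oblique) projection: for any $Q$ the matrix $Q^\top Q$ is symmetric, so $I - Q^\top Q$ is symmetric, and hence the identity can only hold for symmetric $P$. Once this is granted, the single computation to perform with care is the verification that $Q^\top Q$ equals $I - P$, which follows from the uniqueness of orthogonal projections onto a given subspace.
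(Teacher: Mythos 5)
Your proof is correct and follows essentially the same route as the paper's: both rest on the spectral theorem for the symmetric idempotent $P$ and assemble $Q$ from an orthonormal basis of $\operatorname{range}(I-P)$ --- the paper takes the last $k$ rows of $U^T$ in the explicit diagonalization $P = U\,\mathrm{diag}(I_{n-k},0)\,U^T$, while you invoke uniqueness of the orthogonal projection onto a given subspace, a cosmetic difference only. Your closing remark that symmetry of $P$ is \emph{forced} by the conclusion (since $I - Q^TQ$ is always symmetric) is a genuine improvement over the paper's treatment: the paper defines a projection matrix as merely idempotent and then applies the spectral theorem to $P$ without justification, a gap your reading of the hypotheses correctly identifies and repairs.
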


\begin{lemma}\label{lem:projection2}
Let $P \in \mathbb{R}^{n \times n}$ be a projection matrix, Q is an orthogonal matrix $Q \in \mathbb{R}^{k \times n}$ such that:$P = I - Q^TQ$, For any $x \in \mathbb{R}^n$, $Px$ is the orthogonal projection of $x$ onto the complement of the row space of $Q$
\end{lemma}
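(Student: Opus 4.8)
The plan is to exploit the explicit form $P = I - Q^TQ$ together with the orthonormality of the rows of $Q$, which I read the hypothesis ``$Q$ is an orthogonal matrix'' as providing for the non-square case, namely $QQ^T = I_k$. Write $V := \operatorname{row}(Q)$ for the row space of $Q$, which coincides with the column space of $Q^T$ inside $\mathbb{R}^n$. Since $\mathbb{R}^n = V \oplus V^\perp$ is an orthogonal direct sum, the orthogonal projection of any $x$ onto $V^\perp$ is characterized uniquely as the vector $p$ satisfying $p \in V^\perp$ and $x - p \in V$. My goal is therefore to verify that $p = Px$ meets both of these conditions and then appeal to uniqueness.

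First I would show that the residual lies in $V$. From $P = I - Q^TQ$ we get $x - Px = Q^TQx = Q^T(Qx)$, which is a linear combination of the columns of $Q^T$, i.e.\ of the rows of $Q$ regarded as vectors in $\mathbb{R}^n$; hence $x - Px \in V$. Next I would show $Px \in V^\perp$ by checking that $Px$ is orthogonal to every row of $Q$, equivalently that $Q(Px) = 0$. This is the single place where the orthonormality assumption enters: I compute
\begin{equation}
Q(Px) = Q\bigl(x - Q^TQx\bigr) = Qx - (QQ^T)Qx = Qx - I_k Qx = 0,
\end{equation}
using $QQ^T = I_k$. Thus $Px$ is orthogonal to the entire row space $V$, so $Px \in V^\perp$.

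Having exhibited $Px \in V^\perp$ and $x - Px \in V$, I would invoke the uniqueness of the orthogonal decomposition of $x$ into its $V$- and $V^\perp$-components to conclude that $Px$ is exactly the orthogonal projection of $x$ onto $V^\perp$, the orthogonal complement of the row space of $Q$, as claimed.

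As for difficulty, the argument is essentially routine finite-dimensional linear algebra; the only step requiring genuine care is the verification $Q(Px) = 0$, which is precisely where the hypothesis that $Q$ has orthonormal rows ($QQ^T = I_k$) is indispensable, since without it $Q^TQ$ need not be idempotent and $Px$ would fail to land in $V^\perp$. I would also flag at the outset the interpretive convention that ``orthogonal matrix'' for the non-square $Q \in \mathbb{R}^{k \times n}$ should mean row-orthonormality rather than $Q^TQ = I_n$, since the latter (square) reading would force $P = 0$ and trivialize the statement.
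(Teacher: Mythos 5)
Your proof is correct and follows essentially the same route as the paper's: both hinge on the computation $Q(Px) = Qx - (QQ^T)Qx = 0$ using $QQ^T = I_k$ to place $Px$ in the orthogonal complement of the row space of $Q$. If anything, your write-up is slightly more complete, since you also verify explicitly that $x - Px = Q^T(Qx)$ lies in the row space and conclude by uniqueness of the orthogonal decomposition, a step the paper leaves implicit (it instead checks idempotence $P^2 = P$, which your characterization makes unnecessary).
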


The detailed proofs of these two lemmas are provided in the Appendix~\ref{sec:lemma proof}.

\section{Proposed Method: Metamorphosis Representation Projection}
\label{sec:Method process}
\subsection{Projection Matrix Training}
From lemma~\ref{lem:projection1}, we can train a low-rank matrix $Q$ to achieve the effect of the projection matrix $P = I - Q^TQ$. For the training objective, we employ the standard cross-entropy loss $L(W,D)$ to measure the performance of model $f^W$ with parameters $W$ on dataset $D$. To simultaneously consider both unlearning effectiveness and knowledge retention, we formulate the composite loss function as:
\begin{equation}
\mathcal{L}(W,Q) = -L(W,P,D_{\text{unlearn}}) + \alpha L(W,P,D_{\text{retain}}),
\end{equation}
where $\alpha > 1$ is a hyperparameter balancing the trade-off (set to 1.2 as default in our experiments), $P = I - Q^TQ$ is the projection matrix, $D_{\text{unlearn}}$ and $D_{\text{retain}}$ denote the unlearn and retain datasets respectively, then the optimization objective becomes:
\begin{equation}
\underset{Q_h (h \in H)}{\text{argmin}} \left[ -L(W,P_h,D_{\text{unlearn}}) + \alpha L(W,P_h,D_{\text{retain}}) \right]
\end{equation}
where $P_h = I-Q_h^TQ_h$, $H$ represents the set of selected hidden layers for projection.

\subsection{Orthogonal Initialization}

To ensure effective initialization of $Q$, we leverage lemma~\ref{lem:projection2} which suggests that retention task performance is preserved when the row vectors of $Q$ are orthogonal to the retention task's feature space. Our initialization procedure comprises four steps:

\begin{enumerate}
    \item Select $K$ ($K=200$) retain task data to extract hidden states $\{h_1^\text{ret}, ..., h_K^\text{ret}\}$. In order to facilitate the initialization of the projection matrix, we also need to use the QR algorithm~\cite{francis1961qr} calculate their orthogonal basis matrix: $Q_\text{ret} = QR(h_1^\text{ret}, ..., h_K^\text{ret})$
    \item Project the unlearn task's hidden states $h_i^\text{unl}$ onto the orthogonal complement space spanned by $\{h_1^\text{ret}, ..., h_K^\text{ret}\}$: $h_{i,p}^\text{unl} = (I - Q_\text{ret}^TQ_\text{ret})h_i^\text{unl}$
    \item Use PCA~\cite{golub1965calculating} to compute the top-$k$ principal components of the projected hidden states to initialize $Q$'s row vectors:$Q_{\text{init}} = \text{PCA}( h_{1,p}^\text{unl},..., h_{K,p}^\text{unl})$
    This initialization guarantees:
    \begin{itemize}
    \item Orthogonality to retain representation: $Q_{\text{init}} h_i^{\text{ret}} = 0$ for all $i \in [1,...,K]$,
    \item Alignment with unlearn task: $Q_{\text{init}}$ approximates the most significant directions in the unlearn task's residual space.
    \end{itemize}
    \item Combine the initialized matrix $Q_{\text{init}}$ with previous matrix $Q_{\text{prev}}$, and then use QR decomposition again to calculate the orthogonal basis matrix of the matrix row vectors, obtaining the updated initialized matrix $Q$, Finally, obtain the initialized projection matrix $P$: $Q = QR(Q_\text{prev},Q_\text{init}), P = I - Q^TQ$

\end{enumerate}
The specific algorithm of our method is demonstrated in Appendix~\ref{sec:Algorithm}

\section{Experiments}

\subsection{Experiment set-up}
\label{sec: Experiment set-up}
\noindent\textbf{Datasets}.
For the unlearn and retain task, we selected the widely recognized and popular ScienceQA dataset~\cite{lu2022learn}. We collected pure text samples from this dataset to form a new dataset comprising 8,000 training samples and 2,000 test samples. Since the dataset primarily consists of three major subjects - \textit{natural science}, \textit{language science}, and \textit{social science} - we chose four topics from \textit{natural science} as continual unlearning requests: \textit{physics}, \textit{chemistry}, \textit{biology}, and \textit{earth science}. For the test samples, we selected the \textit{language science} and \textit{social science} subjects as the retain dataset to evaluate the model's commonsense reasoning capabilities. To verify the universality of our method across different data, we also conducted experiments using the WMDP dataset~\cite{li2024wmdp} as unlearning data. The specific experiments are detailed in Appendix~\ref{sec:WMDP exp}.

For robustness testing, we designed the following scenario: The attacker cannot obtain direct data related to the forgotten tasks but may access similar data for relearning attacks~\cite{hu2024jogging}. Specifically, we selected the \textit{Genes to traits} category from the \textit{biology} topic as the unlearn task, while using the \textit{Classification} and \textit{Heredity} categories from the same \textit{biology} topic as similar data available to the attacker for relearning through fine-tuning.

\noindent\textbf{Metrics}.
To evaluate the effectiveness of unlearning, we assessed model performance under the following conditions:
For test tasks (all multiple-choice questions with varying options), we simulated complete unlearning by randomly guessing the answer and calculating the QA Accuracy as lower-bound accuracy $ACC^{low}_{t}$. The original model's performance served as the upper-bound accuracy $ACC^{up}_{t}$. The final score for task $t$ was calculated as:
$S_{t} = (ACC_{t} - ACC^{low}_{t})/(ACC^{up}_{t} - ACC^{low}_{t})$\\
In continual unlearning experiments, after unlearning step $n$, we computed: (1)The average score across all unlearned tasks: $S_{\text{unl}}=\frac{1}{n}\sum_{i=1}^{n}S_{\text{unl}}^i$, (2)The average score across two retain tasks: $S_{\text{ret}}=\frac{1}{2}(S_{\text{ret}}^1 + S_{\text{ret}}^2)$. The final score was calculated as:$S_{\text{ret}}-S_{\text{unl}}$. This score captures both unlearning effectiveness and model utility preservation. 

In addition to continuous unlearning scenarios, we evaluated all methods under a non-sequential unlearning setting. This simulates the case where model trainers have access to all unlearn data simultaneously and perform a single unlearning fine-tuning operation on the combined dataset. We measured the performance of this approach using score $Score_\text{all}= \frac{1}{2}(S_{\text{ret}}^1 + S_{\text{ret}}^2) - S_{\text{unl}}^\text{all}$, where $S_{\text{unl}}^\text{all}$ represents all the unlearn data.

For robustness testing, after task unlearning and relearning, we directly measure the QA Accuracy of the model on the unlearn and retain datasets as our evaluation metric.

\noindent\textbf{Compared Baselines}.
To demonstrate our method's advantages, we selected several established baselines from machine unlearning literature, including both classical approaches and recent state-of-the-art methods. Specifically, we compared against GA~\cite{yao2024large}, EUL~\cite{chen2023unlearn}, NPO~\cite{zhang2024negative}, and RMU~\cite{li2024wmdp},O3~\cite{gao2024large} with appropriate adaptations for the continual unsupervised learning scenario.

\noindent\textbf{Implementation Details}.
Following Tofu~\cite{maini2024tofu} and O3~\cite{gao2024large}, we used LLaMA2-7B as our target model, Another popular model Qwen-7B~\cite{bai2023qwen} is also used in our experiment. All experiments are run repeatedly with three random cases, including random projected layers and random continual unlearning order, All experimental results were computed with the mean and 0.5× standard deviation to better evaluate the stability of different methods. We use the AdamW optimizer with 2e-4 as the learning rate and 5 as the batch size for both the unlearn and retain datasets. The default epochs are 2, and the LoRA rank for all experiments is 8. In each unlearning session, we select 200 unlearned and retain data to initialize the projection matrix. For each unlearning task, the dimension of the Q matrix in the projection matrix increases by 2. More details can be found in the Appendix~\ref{sec:More exp detail}.

\subsection{Overall Results}
\begin{table}[h]
\centering
\setlength{\tabcolsep}{5pt} 
\label{tab:unlearn_llama}
\begin{tabular}{l|lllll}
\toprule
\textbf{Method} & \multicolumn{1}{c}{\text{$Score_{1}$}} & \multicolumn{1}{c}{\text{$Score_{2}$}} & \multicolumn{1}{c}{\text{$Score_{3}$}} & \multicolumn{1}{c}{\text{$Score_{4}$}} & \multicolumn{1}{c}{\text{$Score_\text{all}$}}\\
\midrule
GA & 0.585 ± 0.078& 0.603 ± 0.072& 0.529 ± 0.063& 0.387 ± 0.209& 0.675 ± 0.086
\\
EUL & 0.577 ± 0.065& 0.428 ± 0.061& 0.418 ± 0.049& 0.208 ± 0.075& 0.634 ± 0.070
\\
NPO & \textbf{0.982 ± 0.075}& 0.327 ± 0.035& 0.204 ± 0.047& 0.172 ± 0.038& 0.943 ± 0.062
\\
RMU & 0.777 ± 0.101& 0.330 ± 0.054& 0.108 ± 0.060& 0.104 ± 0.067& 0.767 ± 0.040
\\
O3 & 0.889 ± 0.036& 0.784 ± 0.026& 0.793 ± 0.058& 0.785 ± 0.031& 0.878 ± 0.025
\\
\midrule
Ours & 0.950 ± 0.022& \textbf{0.878 ± 0.018}& \textbf{0.896 ± 0.019}& \textbf{0.905 ± 0.041}& \textbf{0.988 ± 0.038}
\\
\bottomrule
\end{tabular}

\caption{Performance scores of different unlearning methods on llama2-7b}

\end{table}

\noindent\textbf{Comparison with non-sequential unlearning} When comparing with $Score_\text{all}$ (the model performance after unified unlearning of all four tasks simultaneously), we find that most baselines show similar performance between $Score_{1}$ (single-task unlearning) and $Score_\text{all}$. This suggests that unified unlearning can indeed produce satisfactory results. However, their $Score_{4}$ scores (after four sequential unlearning operations) degrade dramatically, indicating these baseline methods fundamentally struggle with continual unlearning. Our method achieves an $Score_{4}$ score that differs by only 0.076 from the $Score_\text{all}$ benchmark (0.812 vs. 0.888), demonstrating its superior capability for continual knowledge removal while preserving model utility. In terms of method stability, our approach achieves an average standard deviation of 0.028, while most baselines exhibit higher average deviations (above 0.050). This demonstrates the superior stability of our method, as it consistently performs well across varying sequential unlearning tasks. Additionally, to demonstrate the generality of our method, we conducted the same experiments on the Qwen2.5-7B model and observed that our approach still outperforms other baselines, maintaining an unlearning performance score of 0.938 after four unlearn tasks, which surpasses the best baseline score of 0.762. Detailed experimental results can be found in Appendix~\ref{sec:Qwen exp}.

\begin{figure*}[h]

    \centering
    \begin{tabular}{ccc}    
    \includegraphics[width=0.31\textwidth]{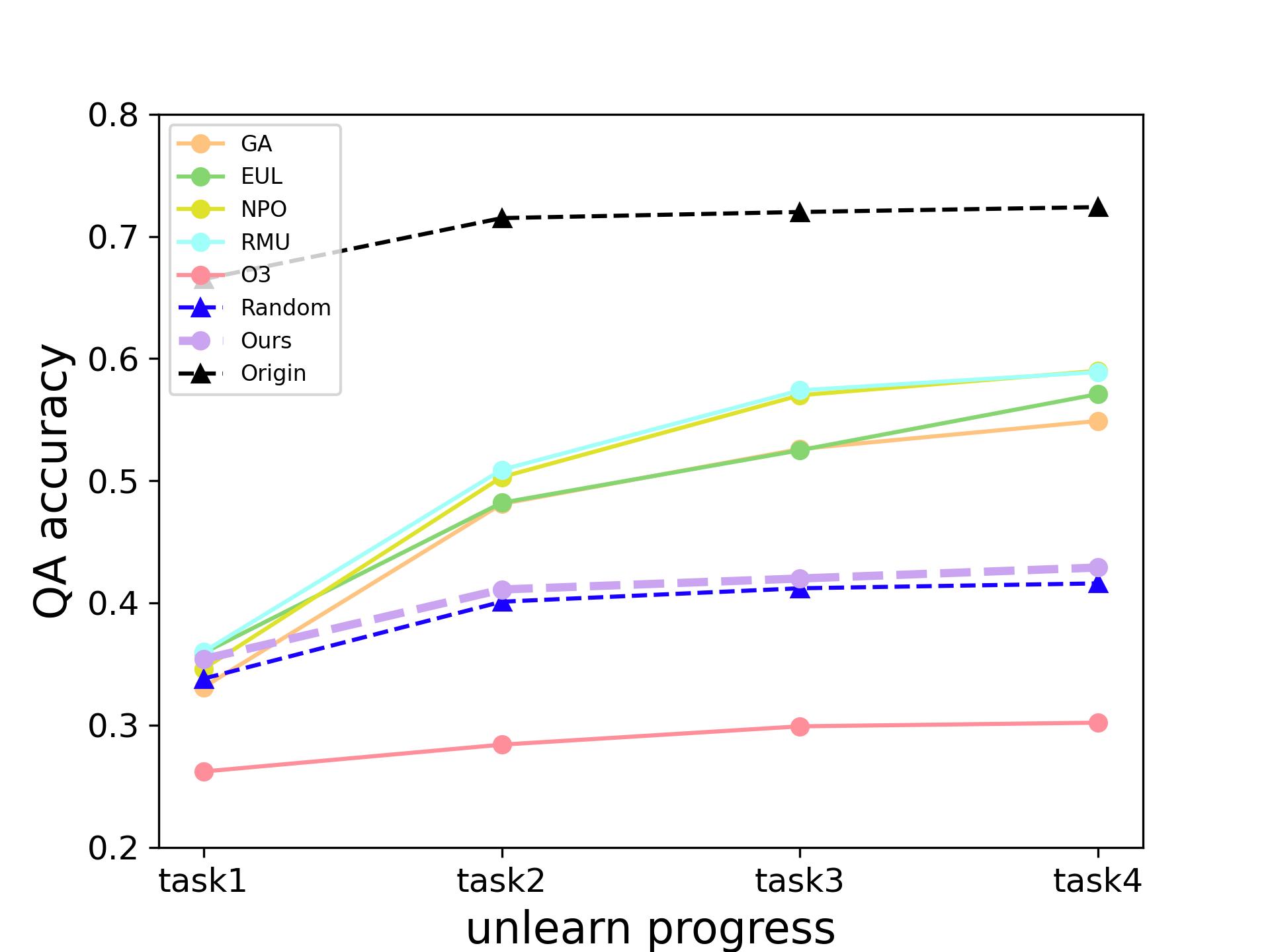}&
    \includegraphics[width=0.31\textwidth]{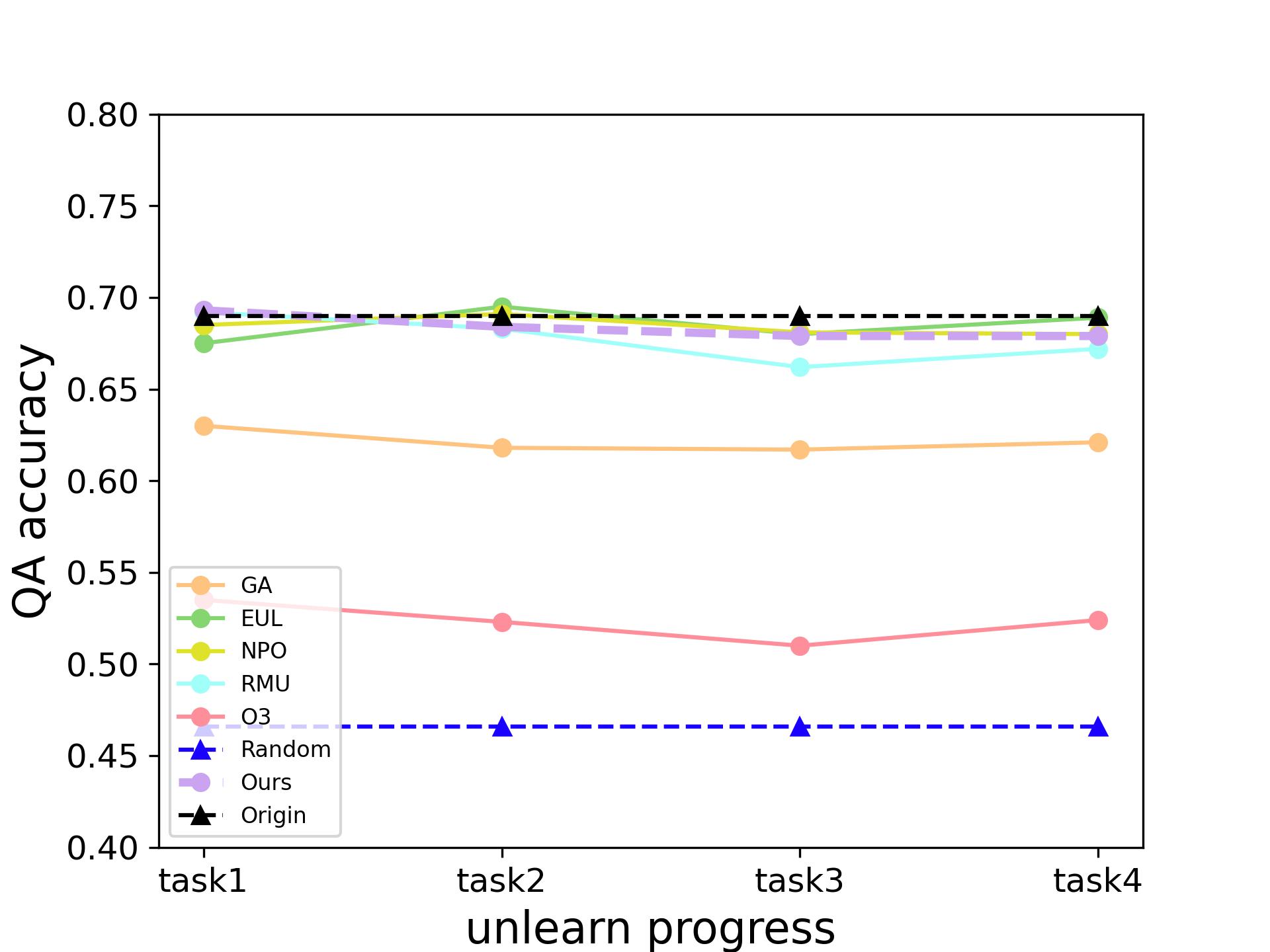}&
    \includegraphics[width=0.31\textwidth]{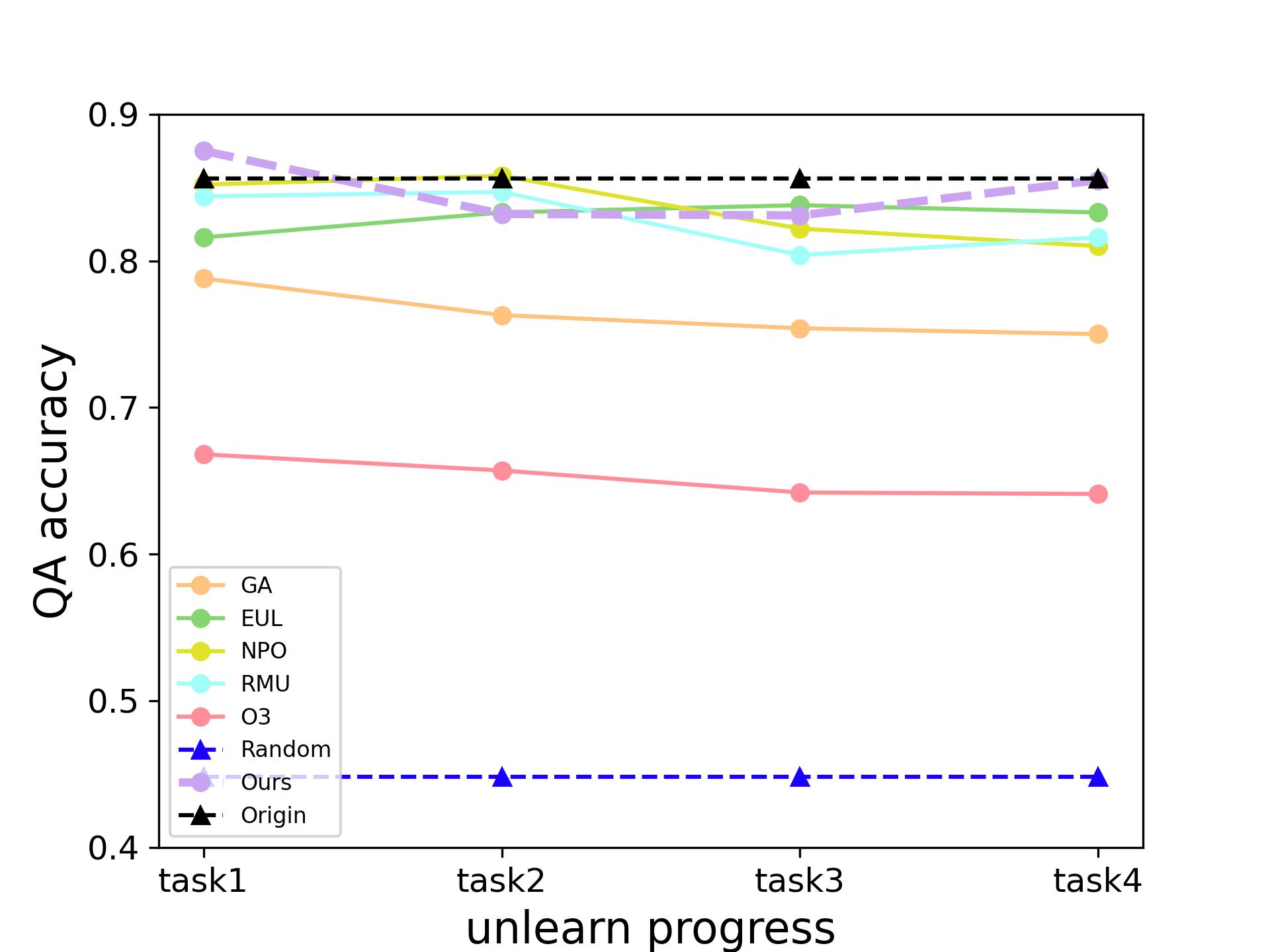} \\
    \makecell{(a) Unlearn Dataset \\ (natural science)} &
    \makecell{(b) Retain Dataset \\ (language science)} &
    \makecell{(c) Retain Dataset \\ (social science)} 
    \end{tabular}

    \caption{The performance of the model on the unlearn and retain datasets under continual unlearning}
    \label{fig:continual unlearning}

\end{figure*}

Beyond overall scores, we conducted a detailed analysis of model performance on unlearn tasks. As illustrated in Figure~\ref{fig:continual unlearning}, \textit{Origin} represents the original model accuracy, while \textit{Random} denotes the accuracy when the model answers randomly. Our research shows that the O3 method exhibits significantly lower accuracy than \textit{Random} (0.27 vs. 0.31), indicating detrimental effects on the model's fundamental capabilities (e.g., answering multiple-choice questions), which is an undesirable side effect that proper unlearning should avoid. Other four baseline methods gradually approach the \textit{Origin} model's accuracy as unlearning progresses, suggesting incomplete unlearning. Our method maintains accuracy comparable to \textit{Random} (0.30 vs. 0.31) even after four consecutive unlearning operations, demonstrating both immediate and persistent unlearning effectiveness.

By analyzing the response situation of the retain dataset, we can see that: GA and O3's severe performance degradation (respectively 10\% and 20\% drop) on retain tasks confirms its damaging impact on model capabilities. Other baseline methods preserve most retain accuracy, the performance difference compared to the original model remains within a 5\% margin, showing their ability to maintain the retain knowledge. These results collectively demonstrate that our approach uniquely satisfies both unlearning completeness and knowledge preservation requirements in continual learning scenarios. 

In addition, we conduct identical experiments on the WMDP dataset. Our method achieved the highest score of 0.891 after continuous unlearning across three datasets, demonstrating significant improvement over the baseline's highest score of 0.810. These results confirm our method's broad effectiveness across different datasets. Detailed experimental results can be found in Appendix~\ref{sec:WMDP exp}.

\subsection{Unlearning without the same retain dataset}

We also considered another scenario where keeping the retain dataset for an extended period may be impractical. In such cases, we can only perform the retain task using data similar to the retain dataset. To simulate this condition, we only keep one of the language science or social science datasets as the training retain set, while using the other as the testing retain set. The experimental results are demonstrated in Table~\ref{tab:unlearn_only_language} and Table~\ref{tab:unlearn_only_social}

\begin{table}[h]
\centering

\setlength{\tabcolsep}{5pt} 
\caption{Performance score of the model when only using language science as the training retain dataset}

\label{tab:unlearn_only_language}
\begin{tabular}{l|lllll}
\toprule
\textbf{Method} & \multicolumn{1}{c}{\text{$Score_{1}$}} & \multicolumn{1}{c}{\text{$Score_{2}$}} & \multicolumn{1}{c}{\text{$Score_{3}$}} & \multicolumn{1}{c}{\text{$Score_{4}$}} & \multicolumn{1}{c}{\text{$Score_\text{all}$}}\\
\midrule
GA & 0.440 ± 0.007& 0.464 ± 0.107& 0.204 ± 0.084& 0.213 ± 0.063& 0.415 ± 0.055
\\
EUL & 0.756 ± 0.028& 0.273 ± 0.052& 0.243 ± 0.102& 0.133 ± 0.075& 0.777 ± 0.047
\\
NPO & 0.829 ± 0.053& 0.653 ± 0.030& 0.474 ± 0.037& 0.378 ± 0.173& 0.834 ± 0.017
\\
RMU & 0.804 ± 0.086& 0.484 ± 0.038& 0.498 ± 0.034& 0.415 ± 0.009& 0.818 ± 0.069
\\
O3 & 0.764 ± 0.056& 0.746 ± 0.040& 0.654 ± 0.097& 0.710 ± 0.045& 0.804 ± 0.015
\\
\midrule
Ours & \textbf{0.863 ± 0.029}&\textbf{ 0.836 ± 0.007}& \textbf{0.815 ± 0.023}& \textbf{0.836 ± 0.042}& \textbf{0.857 ± 0.028}
\\
\bottomrule
\end{tabular}

\end{table}

\begin{table}[h]
\centering
\setlength{\tabcolsep}{5pt} 
\caption{Performance score of the model when only using social science as the training retain dataset}

\label{tab:unlearn_only_social}
\begin{tabular}{l|lllll}
\toprule
\textbf{Method} & \multicolumn{1}{c}{\text{$Score_{1}$}} & \multicolumn{1}{c}{\text{$Score_{2}$}} & \multicolumn{1}{c}{\text{$Score_{3}$}} & \multicolumn{1}{c}{\text{$Score_{4}$}} & \multicolumn{1}{c}{\text{$Score_\text{all}$}}\\
\midrule
GA & 0.592 ± 0.084& 0.205 ± 0.073& 0.113 ± 0.103& 0.237 ± 0.091& 0.452 ± 0.065
\\
EUL & 0.758 ± 0.081& 0.134 ± 0.079& 0.287 ± 0.118& 0.072 ± 0.171& 0.684 ± 0.064
\\
NPO & \textbf{0.843 ± 0.037}& 0.691 ± 0.083& 0.448 ± 0.021& 0.314 ± 0.099& 0.789 ± 0.033
\\
RMU & 0.789 ± 0.067& 0.435 ± 0.092& 0.570 ± 0.094& 0.504 ± 0.124& \textbf{0.920 ± 0.081}
\\
O3 & 0.603 ± 0.086& 0.708 ± 0.049& 0.661 ± 0.044& 0.675 ± 0.116& 0.802 ± 0.062
\\
\midrule
Ours & 0.822 ± 0.043& \textbf{0.826 ± 0.025}& \textbf{0.829 ± 0.018}& \textbf{0.792 ± 0.090}& 0.851 ± 0.040
\\
\bottomrule
\end{tabular}

\end{table}

Our findings demonstrate that our method remains effective even under these conditions, maintaining an average unlearning performance score of 0.814 after four unlearn tasks, which surpasses the highest average baseline score of 0.693. Moreover, our method exhibits a lower score standard deviation, indicating that its stability remains robust even in the absence of a retain dataset. Additional results about model performance on unlearn tasks and retain tasks are provided in Appendix~\ref{sec:w.o. same retain}.

\subsection{Robustness evaluation}

\begin{figure*}[h]
    \centering

    \begin{tabular}{ccc}    
    \includegraphics[width=0.31\textwidth]{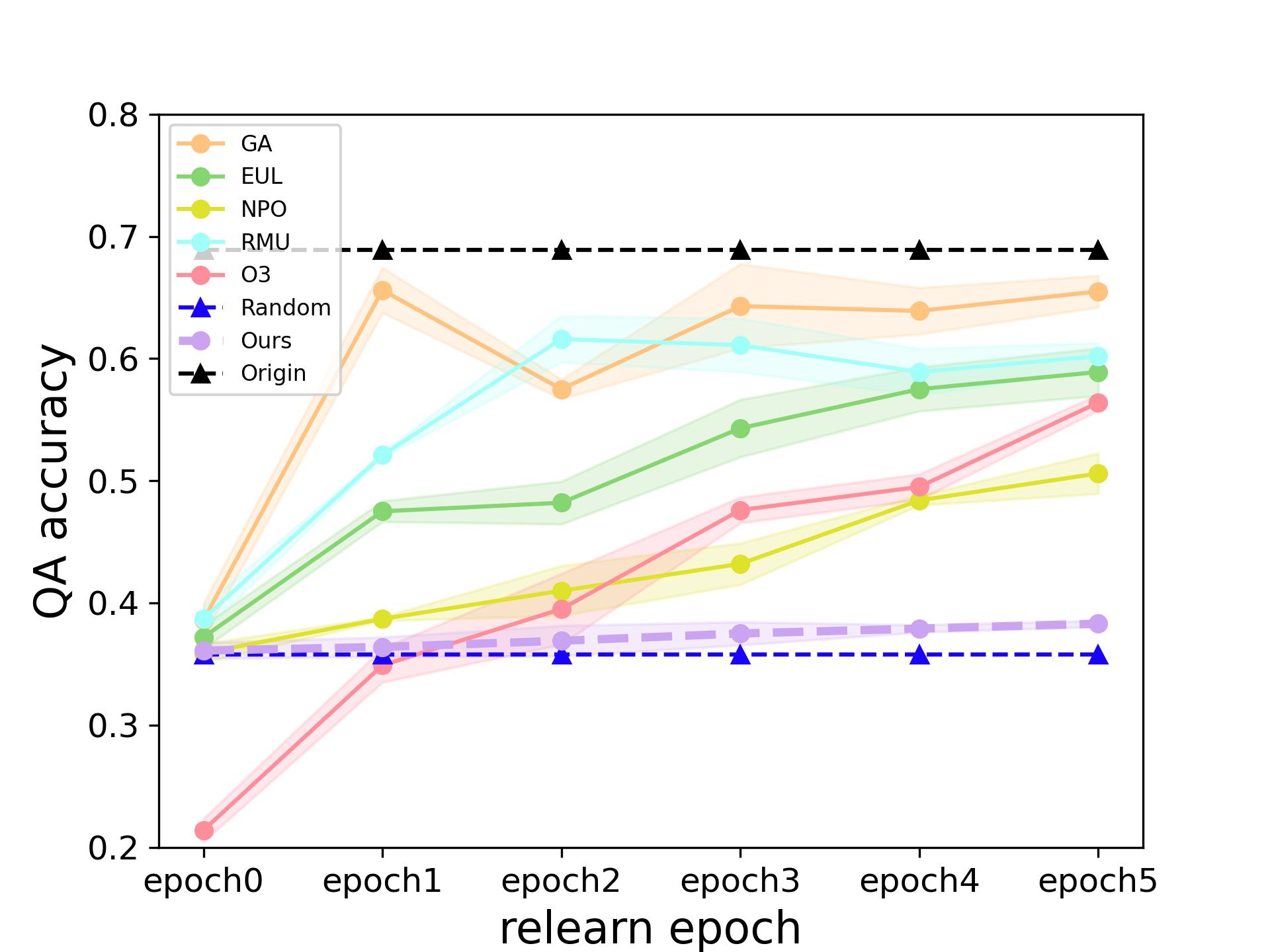}&
    \includegraphics[width=0.31\textwidth]{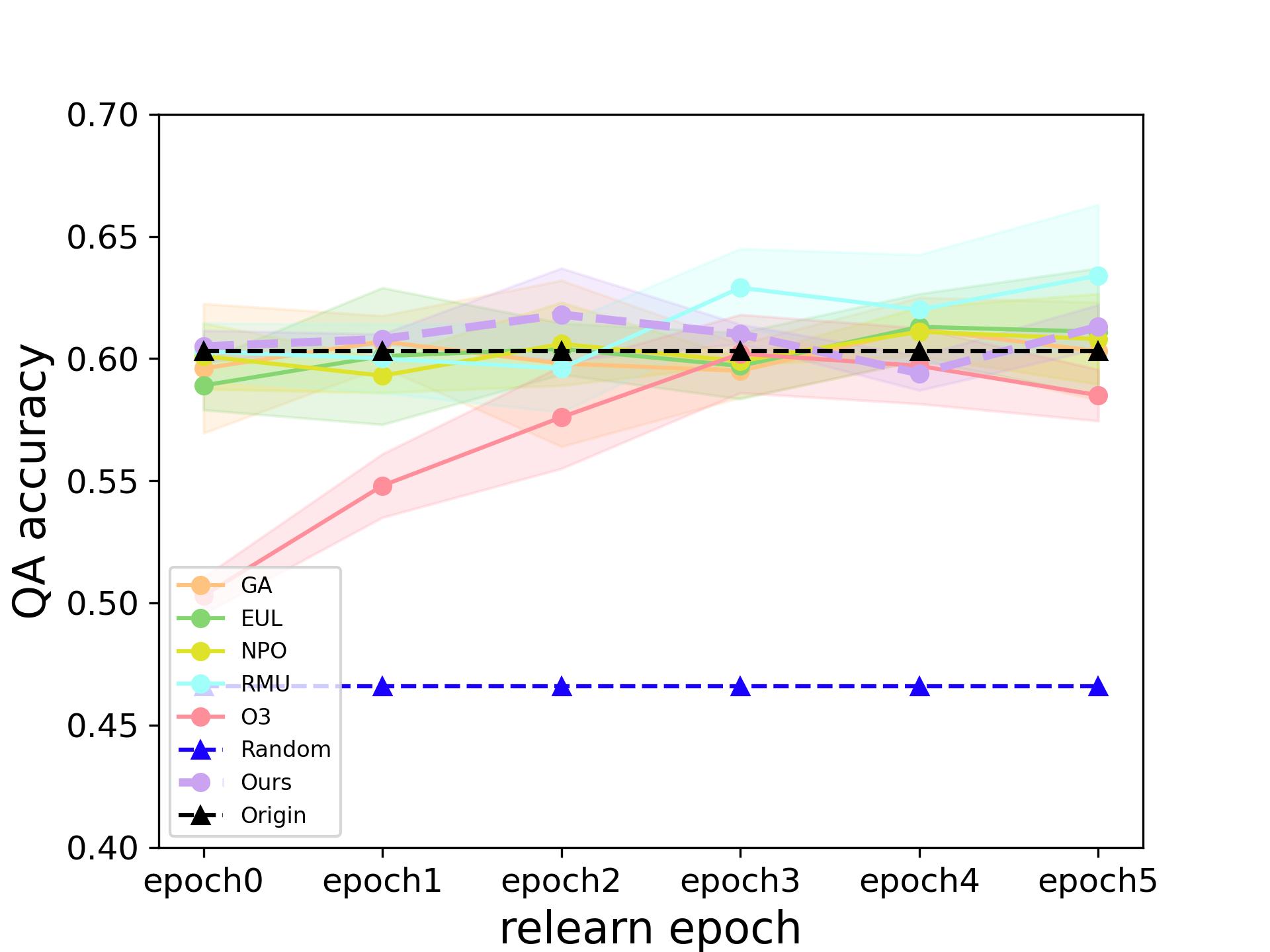}&
    \includegraphics[width=0.31\textwidth]{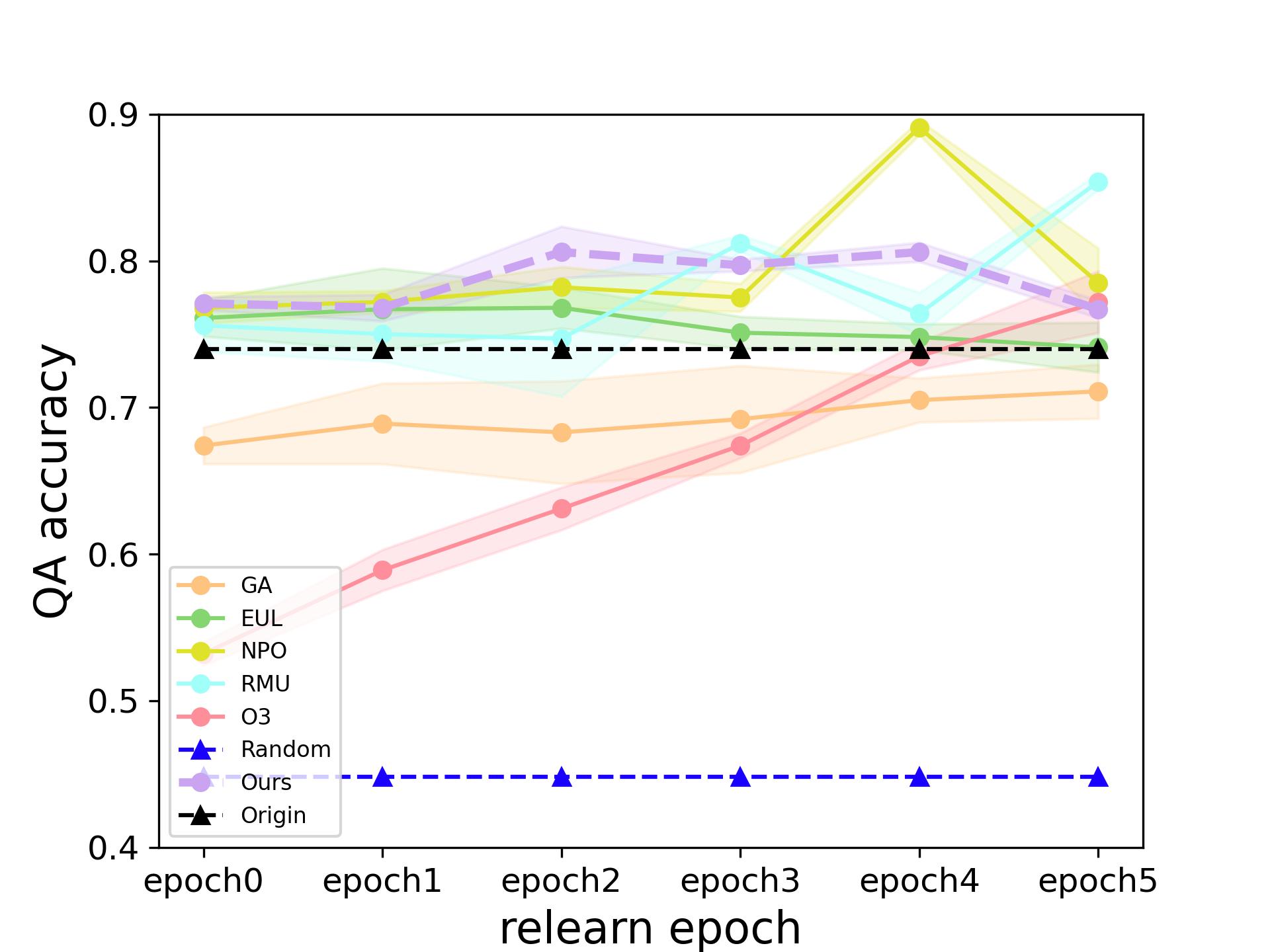} \\
    \makecell{(a) Unlearn Dataset \\ (natural science)} &
    \makecell{(b) Retain Dataset \\ (language science)} &
    \makecell{(c) Retain Dataset \\ (social science)} 
    \end{tabular}

    \caption{The performance of the model on the unlearn and retain datasets under relearn attack}
    \label{fig:relearn attack}

\end{figure*}

As shown in Figure~\ref{fig:relearn attack}, our method also demonstrates promising effectiveness against relearn attacks. While most methods achieve satisfactory performance in the unlearn task, successfully reducing the accuracy on the unlearn dataset below 0.4 (close to the random guess lower bound of 0.35), they show vulnerability when facing relearn attacks. Notably, even when the relearn dataset is similar to yet distinct from the unlearn dataset, most methods exhibit accuracy exceeding 0.45 after just the first relearn epoch. After 5 epochs, these methods typically reach an accuracy of around 0.5. This observation strongly suggests that these approaches do not genuinely erase knowledge but rather mask the model's activation patterns for specific knowledge through parameter fine-tuning. In contrast, our method maintains accuracy below 0.4 even after 5 relearn epochs, demonstrating superior robustness in ensuring complete unlearning of the unlearn dataset knowledge.

For the retain dataset, our experiments reveal that the relearning also improves the performance of the model on the retain dataset. This phenomenon suggests that the fine-tuning process on the relearn dataset may provide beneficial transfer effects to the retain dataset, potentially enhancing its performance through shared feature representations. Moreover, our method demonstrates a narrower 0.5× standard deviation range in accuracy for both unlearn and retain datasets, further evidencing its superior stability.

\subsection{Computational cost comparison }

\begin{wraptable}{r}{9cm}

\centering
\setlength{\tabcolsep}{4pt} 
\begin{tabular}{l|llllll}
\toprule
method& GA& EUL& NPO& RMU& O3& Ours\\
\midrule
\makecell{\texttt{\#}Trainable \\ Parameters}& 4.2M& 33.7M& 4.2M& 4.2M& 20M& \textbf{0.1M}\\
\hline
\makecell{Clock time\\ per batch (s)} & 1.02& 1.28& 0.73& 0.89& 1.64&\textbf{0.71}\\
\bottomrule
\end{tabular}
\caption{computational cost comparison across different methods}
\label{tab:compute-cost}

\end{wraptable}

Our method exhibits advantages in both parameter efficiency and computational speed, as quantified in Table~\ref{tab:compute-cost}. During continuous unlearning of four tasks, the approach achieves superior performance with merely \text{0.1M trainable parameters} and gains an order of magnitude reduction, compared to the \text{4.2M parameters} required by conventional LoRA-based methods. Furthermore, runtime measurements reveal our method's computational superiority, processing batches in \text{0.71 seconds} (vs. 0.89-1.28s for alternatives), representing a \text{20-45\% speedup}. These combined efficiencies make the approach particularly suitable for resource-constrained environments.

\subsection{Ablation Study}

\noindent\textbf{Removing matrix initialization}.
In our framework, the initialization of projection matrices constitutes a crucial component. To validate its effectiveness, we conduct an ablation study by removing this initialization step. When the matrix is randomly initialized, experimental results demonstrate that the training requires five epochs to achieve optimal performance (with other hyperparameters kept at default settings).

\begin{table}[h]

\centering
\setlength{\tabcolsep}{4pt} 
\begin{tabular}{llllll}
\toprule
\textbf{Method} & \multicolumn{1}{c}{\text{$Score_{1}$}} & \multicolumn{1}{c}{\text{$Score_{2}$}} & \multicolumn{1}{c}{\text{$Score_{3}$}} & \multicolumn{1}{c}{\text{$Score_{4}$}} & \multicolumn{1}{c}{\text{$Score_\text{all}$}}\\
\midrule
w/ initial & \textbf{0.950 ± 0.022}& \textbf{0.878 ± 0.018}& \textbf{0.896 ± 0.019}& \textbf{0.905 ± 0.041}& \textbf{0.988 ± 0.038}
\\
w/o initial & 0.881 ± 0.054& 0.761 ± 0.057& 0.768 ± 0.045& 0.655 ± 0.072& 0.856 ± 0.056\\
\bottomrule
\end{tabular}

\caption{Performance comparison of our method with (w/) or without (w/o) the projection matrix initialization}
\label{tab:without_initial}

\end{table}

As evidenced in Table~\ref{tab:without_initial}, the non-initialized projection matrix exhibits significantly inferior performance. The initial score begins below 0.9, and further deteriorates to 0.65 after unlearning four consecutive tasks. This suggests that merely fine-tuning randomly initialized parameters still adversely affects the model's retention capability. In contrast, our orthogonal initialization method provides two key advantages: (1) it preserves model performance on retain tasks by enforcing orthogonality between the projection directions and retain data, and (2) it substantially reduces computational resources - achieving competitive results within just 2 training epochs compared to the five epochs required by the random initialization baseline.

In our experiments, the number of projection layers and the dimensionality of projection matrices emerged as two critical hyperparameters. Through comprehensive hyperparameter studies, we observed that the model maintains a robust unlearning score of 0.890 after 4 unlearning tasks, even when reduced to just 1 projection layer. Notably, increasing the projection matrix dimensionality from 1 to 2 yields a dramatic performance leap, improving the unlearning score from 0.591 to 0.905. Detailed results can be found in Appendix~\ref{sec:hyperparameter}.

\section{Conclusion}

In this work, we presented Metamorphosis Representation Projection (MRP), a novel method for effective continual unlearning in large language models. Our work provides a comprehensive solution to two persistent challenges in machine unlearning---continuous unlearning and defense against relearning attacks. Through both theoretical analysis and extensive experiments, we demonstrate that the proposed method: (1) achieves stable performance in sequential unlearning scenarios without catastrophic forgetting, and (2) effectively prevents knowledge recovery via relearning attacks. These contributions advance the field by establishing a new paradigm for representation-level unlearning while offering practical insights for real-world deployment. In addition, we discussed the content of the article on limitations and future work in Appendix~\ref{sec:limitations}.

\bibliographystyle{plain}
\bibliography{reference}

\newpage
\appendix
\section*{Appendix}  
\addcontentsline{toc}{section}{Appendix}  
\renewcommand{\thesubsection}{\thesection.\arabic{subsection}}  

\definecolor{lightblue}{RGB}{220, 240, 255} 
\newtcolorbox{scamwarningbox}{
    colback=lightblue, 
    colframe=black,    
    boxrule=1pt,      
    arc=3pt,           
    boxsep=5pt,        
    left=6pt,          
    right=6pt,         
    top=6pt,           
    bottom=6pt,        
    fontupper=\small
}  

\section{Proof of lemma~\ref{lem:projection1} and lemma~\ref{lem:projection2}}
\label{sec:lemma proof}

\subsection{Definitions}

\begin{definition}
A matrix $P \in \mathbb{R}^{n \times n}$ is called a \emph{projection matrix} if it is idempotent, i.e., $P^2 = P$.
\end{definition}

\begin{definition}
A matrix $Q \in \mathbb{R}^{k \times n}$ is called \emph{orthogonal} if its rows are orthonormal vectors, i.e., $QQ^T = I_k$ where $I_k$ is the $k \times k$ identity matrix.
\end{definition}

\subsection{Proof of Spectral Theorem}

Before formally proving lemma~\ref{lem:projection1}, we first need to prove a fundamental theorem.

\begin{theorem}[Spectral Theorem for Real Symmetric Matrices]
\label{thm: Spectral Theorem}
Let $A \in \mathbb{R}^{n \times n}$ be a real symmetric matrix ($A = A^T$, all eigenvalues of $A$ are real). Then $A$ can be diagonalized as:
\[
A = Q\Lambda Q^T
\]
where $\Lambda = \text{diag}(\lambda_1, \ldots, \lambda_n)$ contains the eigenvalues of $A$, and $Q$ is an orthogonal matrix ($Q^TQ = I$) whose columns are the corresponding eigenvectors.
\end{theorem}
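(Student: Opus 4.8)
The plan is to prove the theorem by induction on the dimension $n$, building the orthonormal eigenbasis one vector at a time. The base case $n = 1$ is immediate, since any $1 \times 1$ matrix is already diagonal with $Q = [1]$. The inductive step will peel off a single eigenvector, pass to its orthogonal complement, and apply the induction hypothesis there.

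For the inductive step I would first guarantee the existence of one real eigenvalue--eigenvector pair. Over $\mathbb{C}$ the characteristic polynomial of $A$ has a root $\lambda$ with some eigenvector $v \neq 0$. To confirm $\lambda \in \mathbb{R}$, I would evaluate the scalar $v^{*} A v$ in two ways: on one hand it equals $\lambda \|v\|^2$, and on the other hand, since $A = A^{T}$ is real, $v^{*} A v$ equals its own complex conjugate and is therefore real. As $\|v\|^2 > 0$, this forces $\lambda$ to be real. Because $\lambda$ and $A$ are both real, the real matrix $A - \lambda I$ is singular, so a real unit eigenvector $q_1$ exists.

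The key structural step, and the one I expect to be the main obstacle, is to restrict $A$ to the orthogonal complement $W = \{ x \in \mathbb{R}^n : \langle x, q_1 \rangle = 0 \}$ and argue that the induction hypothesis applies to the restriction. I would show $W$ is $A$-invariant: for any $w \in W$, symmetry gives $\langle A w, q_1 \rangle = \langle w, A q_1 \rangle = \lambda \langle w, q_1 \rangle = 0$, hence $A w \in W$. Choosing an orthonormal basis of $W$ and expressing $A|_{W}$ in that basis yields a symmetric $(n-1) \times (n-1)$ matrix, to which the induction hypothesis supplies orthonormal eigenvectors $q_2, \dots, q_n$. The delicate bookkeeping here is verifying that the restriction stays symmetric in the chosen basis, since that is precisely what keeps the induction self-consistent.

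Finally I would assemble the matrix $Q = [\, q_1 \ \cdots \ q_n \,]$ whose columns are these orthonormal eigenvectors. Since $A q_i = \lambda_i q_i$ for each $i$, we have $A Q = Q \Lambda$ with $\Lambda = \mathrm{diag}(\lambda_1, \dots, \lambda_n)$, and orthogonality of $Q$ (so that $Q^{T} Q = I$ and $Q^{-1} = Q^{T}$) gives $A = Q \Lambda Q^{T}$, completing the proof.
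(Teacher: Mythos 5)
Your proposal is correct and takes essentially the same route as the paper's proof: induction on $n$, extraction of one real eigenvalue via the $v^{*}Av$ conjugation argument, and reduction to the symmetric $(n-1)\times(n-1)$ matrix obtained by restricting $A$ to the orthogonal complement of $q_1$ (which the paper expresses equivalently as the block $A_{n-1}$ appearing in $Q_1^{T}AQ_1$ after extending $q_1$ to an orthonormal basis). If anything, your explicit remark that $A-\lambda I$ is a \emph{real} singular matrix, and hence admits a real unit eigenvector, makes precise a small step the paper leaves implicit when it passes from a complex eigenvector to a real one.
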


\begin{proof}
We proceed by induction on the dimension $n$.

\paragraph{Base case ($n=1$):} 
Any $1 \times 1$ matrix is trivially diagonal, and the theorem holds.

\paragraph{Inductive step:} 
Assume the theorem holds for all symmetric matrices of size $(n-1)\times(n-1)$. 

\vspace{0.5em}
\noindent\textbf{Step 1: Existence of real eigenvalues.}
Consider $A$ as a linear operator on $\mathbb{C}^n$. The characteristic polynomial $p_A(\lambda) = \det(A - \lambda I)$ has at least one complex root $\lambda_1$ by the Fundamental Theorem of Algebra. Let $\mathbf{q_1} \in \mathbb{C}^n$ be a corresponding eigenvector ($(A - \lambda_1 I)\mathbf{q_1}=0$) with $\|\mathbf{q_1}\| = 1$.

Since $A$ is real and symmetric:
\[
\lambda_1 = \mathbf{q_1}^*\lambda_1 I\mathbf{q_1} = \mathbf{q_1}^*A^T\mathbf{q_1} = (A\mathbf{q_1})^*\mathbf{q_1} = \overline{\lambda_1}\mathbf{q_1}^*\mathbf{q_1} = \overline{\lambda_1}
\]
where $\mathbf{q_1}^*$ is the conjugate transpose of  $\mathbf{q_1}$\\
Thus $\lambda_1 \in \mathbb{R}$.

\vspace{0.5em}
\noindent\textbf{Step 2: Construction of orthonormal basis.}
Let $\lambda_1$ be an eigenvalue of $A$ with corresponding unit eigenvector $\mathbf{q}_1 \in \mathbb{R}^n$. Extend $\mathbf{q}_1$ to an orthonormal basis $\{\mathbf{q}_1, \mathbf{w}_2, \ldots, \mathbf{w}_n\}$ of $\mathbb{R}^n$.

Let $Q_1 = [\mathbf{q}_1\ \mathbf{w}_2\ \cdots\ \mathbf{w}_n]$. Then:
\[
Q_1^TAQ_1 = \begin{bmatrix}
\lambda_1 & \mathbf{0}^T \\
\mathbf{0} & A_{n-1}
\end{bmatrix}
\]
where $A_{n-1}$ is $(n-1)\times(n-1)$ and symmetric (since $(Q_1^TAQ_1)^T = Q_1^TA^TQ_1 = Q_1^TAQ_1$).

\vspace{0.5em}
\noindent\textbf{Step 3: Induction.}
By the induction hypothesis, $A_{n-1}$ has an orthonormal eigenbasis $\{\mathbf{q}_2, \ldots, \mathbf{q}_n\}$. Let:
\[
Q_2 = \begin{bmatrix}
1 & \mathbf{0}^T \\
\mathbf{0} & \widetilde{Q}
\end{bmatrix}, \quad \text{where } \widetilde{Q} = [\mathbf{q}_2 \cdots \mathbf{q}_n]
\]
Then $Q = Q_1Q_2$ is orthogonal and:
\[
Q^TAQ = \Lambda = \text{diag}(\lambda_1, \ldots, \lambda_n)
\]
Thus $A = Q\Lambda Q^T$.

\vspace{0.5em}
\noindent\textbf{Step 4: Orthogonality of eigenvectors.}
For distinct eigenvalues $\lambda_i \neq \lambda_j$:
\[
\lambda_i\mathbf{q}_i^T\mathbf{q}_j = (A\mathbf{q}_i)^T\mathbf{q}_j = \mathbf{q}_i^TA^T\mathbf{q}_j = \mathbf{q}_i^TA\mathbf{q}_j = \lambda_j\mathbf{q}_i^T\mathbf{q}_j
\]
Thus $(\lambda_i - \lambda_j)\mathbf{q}_i^T\mathbf{q}_j = 0 \implies \mathbf{q}_i^T\mathbf{q}_j = 0$.

For repeated eigenvalues, we can choose orthonormal eigenvectors via Gram-Schmidt.
\end{proof}

\begin{corollary}[Spectral Decomposition]
Any real symmetric matrix $A$ can be written as:
\[
A = \sum_{i=1}^n \lambda_i \mathbf{q}_i\mathbf{q}_i^T
\]
where $\lambda_i$ are eigenvalues and $\{\mathbf{q}_i\}$ form an orthonormal basis.
\end{corollary}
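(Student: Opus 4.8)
The plan is to obtain the corollary as an immediate consequence of the Spectral Theorem (Theorem~\ref{thm: Spectral Theorem}), which already supplies the factorization $A = Q\Lambda Q^T$ with $Q = [\mathbf{q}_1\ \cdots\ \mathbf{q}_n]$ orthogonal and $\Lambda = \text{diag}(\lambda_1, \ldots, \lambda_n)$. Since $Q$ is orthogonal, its columns $\{\mathbf{q}_i\}$ are automatically orthonormal, so the ``orthonormal basis'' part of the claim requires no extra argument and is inherited directly from the theorem. The only substantive task is to rewrite the compact product $Q\Lambda Q^T$ as the stated sum of rank-one outer products.

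First I would invoke the column-times-row (outer-product) form of matrix multiplication: for matrices $B$ and $C$ of compatible size, $BC = \sum_i \mathbf{b}_i \mathbf{c}_i^T$, where $\mathbf{b}_i$ denotes the $i$-th column of $B$ and $\mathbf{c}_i^T$ the $i$-th row of $C$. I would then identify the two factors in $A = (Q\Lambda) Q^T$. Right-multiplication by the diagonal matrix $\Lambda$ scales the $i$-th column of $Q$ by $\lambda_i$, so the columns of $Q\Lambda$ are exactly $\lambda_i \mathbf{q}_i$; meanwhile the rows of $Q^T$ are precisely the $\mathbf{q}_i^T$. Applying the outer-product formula with $B = Q\Lambda$ and $C = Q^T$ gives
\[
A = \sum_{i=1}^n (\lambda_i \mathbf{q}_i)\,\mathbf{q}_i^T = \sum_{i=1}^n \lambda_i \mathbf{q}_i \mathbf{q}_i^T,
\]
which is the desired decomposition. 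As a cross-check I would also verify the identity entrywise: since $(Q\Lambda Q^T)_{jk} = \sum_i \lambda_i (\mathbf{q}_i)_j (\mathbf{q}_i)_k$, this coincides with the $(j,k)$ entry of $\sum_i \lambda_i \mathbf{q}_i \mathbf{q}_i^T$, confirming equality component by component.

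There is no genuine obstacle here, as the corollary is a mechanical restatement of the theorem rather than a new result; the entire content is the reinterpretation of the matrix product. The single point that warrants care is the index bookkeeping in the outer-product expansion, ensuring that the scaling factor $\lambda_i$ attaches to the correct column $\mathbf{q}_i$ and that each summand is a genuine rank-one matrix $\mathbf{q}_i\mathbf{q}_i^T$ rather than the scalar $\mathbf{q}_i^T\mathbf{q}_i$.
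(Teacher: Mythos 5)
Your proposal is correct and matches the paper's (implicit) argument: the paper states this corollary without a separate proof, treating it as an immediate consequence of Theorem~\ref{thm: Spectral Theorem}, and your expansion of $A = Q\Lambda Q^T$ into the sum of rank-one outer products $\sum_{i=1}^n \lambda_i \mathbf{q}_i\mathbf{q}_i^T$ is precisely that step made explicit. The entrywise cross-check is a nice touch but adds nothing beyond the column-times-row identity.
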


\subsection{Proof of lemma~\ref{lem:projection1}}

\begin{proof}[Proof of Lemma \ref{lem:projection1}]
Let $P$ be a projection matrix. Since $P$ is idempotent, it can be diagonalized with eigenvalues either 0 or 1. 

Let $k = \text{rank}(I - P)$. By theorem~\ref{thm: Spectral Theorem}, There exists an orthogonal matrix $U \in \mathbb{R}^{n \times n}$ such that:
\[ P = U \begin{bmatrix}
I_{n-k} & 0 \\
0 & 0
\end{bmatrix} U^T \]
where $I_{n-k}$ is the $(n-k) \times (n-k)$ identity matrix.

Then:
\[ I - P = U \begin{bmatrix}
0 & 0 \\
0 & I_k
\end{bmatrix} U^T \]

Let $Q \in \mathbb{R}^{k \times n}$ consist of the last $k$ rows of $U^T$. Since $U$ is orthogonal, the rows of $Q$ are orthonormal, making $Q$ an orthogonal matrix. We can then write:
\[ I - P = \sum_{i=n-k+1}^n u_i u_i^T = Q^T Q \]
where $u_i$ are the columns of $U$.

Therefore, we have shown that $P = I - Q^T Q$ for some orthogonal matrix $Q$.
\end{proof}

\subsection{Proof of lemma~\ref{lem:projection2}}

\begin{proof}[Proof of Lemma \ref{lem:projection2}]
Given $P = I - Q^T Q$ where $Q$ is orthogonal, we need to show that for any $x \in \mathbb{R}^n$, $Px$ is the orthogonal projection onto the complement of the row space of $Q$.

Let $\mathcal{R}(Q)$ denote the row space of $Q$ and $\mathcal{R}(Q)^\perp$ its orthogonal complement.

1. \textbf{Projection property}: First, verify that $P$ is indeed a projection:
\begin{align*}
P^2 &= (I - Q^T Q)(I - Q^T Q) \\
    &= I - 2Q^T Q + Q^T QQ^T Q \\
    &= I - Q^T Q \quad (\text{since} QQ^T = I_k) \\
    &= P
\end{align*}
2. \textbf{Range space}: For any $x \in \mathbb{R}^n$, $Px = x - Q^T Qx$. Note that:
\[ QPx = Q(x - Q^T Qx) = Qx - (QQ^T)Qx = Qx -Qx = 0 \]
so $x - xQ^T Q$ is orthogonal to $\mathcal{R}(Q)$ 
Therefore, $xP$ is indeed the orthogonal projection of $x$ onto the complement of the row space of $Q$.
\end{proof}

\section{Algorithm of MRP}
\label{sec:Algorithm}

Our algorithm is presented as follows, with detailed procedures available in Section Proposed Method~\ref{sec:Method process} of the paper

\begin{algorithm}[h]
\caption{Continual Unlearning algorithm}
\label{alg:algorithm}
\textbf{Input}: Unlearn datasets $D_{u1} \sim D_{un}$, Retain dataset $D_r$, hyperparameter $\alpha$, Learning rate $\beta$, Projection layer set $L$, Origin Model $f^{W}$, Model Parameter $W$,Projection dimension $k$\\
\textbf{Output}: Unlearned model $f^{W,P}$
\begin{algorithmic}[1] 
\FOR{$i \in [1...n]$}

\FOR{$l \in L$}
\STATE Record previous projection matrices $P_{l,prev} = I-Q_{l,prev}^TQ_{l,prev}$

\STATE compute Hidden state principal components matrix of retain tasks $H_{l,r}$
\STATE Using QR decomposition to calculate the orthogonal basis of $H_{l,r}$: $Q_{l,r} = QR(H_ {l,r}^T)$
\STATE compute Hidden state principal components matrix of unlearn tasks $H_{l,ui}$
\STATE Calculate the principal components after projecting $H_{l,ui}$: $H_{l,i}=PCA((I-Q_{l,r}^TQ_{l,r})H_{l,ui}^T)$
\STATE Combine $Q_{l,prev}$ and $H_{l,i}$ and calculate their orthogonal basis $Q_{l,i} = QR(Q_{l,prev}^T,H_{l,i}^T)$
\STATE Update projection matrix$P_{l,i} = I-Q_{l,i}^TQ_{l,i}$
\STATE $P_{i}= \{P_{l,i}| l\in L\}, Q_{i} =\{Q_{l,i}| l\in L\}$
\STATE Use the hook function to convert $f^{W}$ to $f^{W,P_{i}}$
\ENDFOR
\\
\STATE $\mathcal{L}(W,Q_{i},d_{\text{u}},d_{\text{r}}) = \alpha L(W,P_{i},d_{\text{r}})-L(W,P_{i},d_{\text{u}})$
\\
\FOR {$(d_{\text{u}},d_{\text{r}})$ in $(D_{\text{ui}},D_{\text{r}})$}
\FOR{$l$ in $L$}
\STATE $Q_{l,i}=Q_{l,i}-\beta\nabla_{Q_{l,i}}{\mathcal{L}(W,Q_{i},d_{\text{u}},d_{\text{r}})}$
\ENDFOR
\ENDFOR
\\
\FOR{$l$ in $L$}
\STATE Update the previous  projection matrix $Q_{l,prev} = Q_{l,i}$ 
\ENDFOR
\ENDFOR
\\
\STATE \textbf{return} Unlearned model $f^{W,P}$
\end{algorithmic}
\end{algorithm}

\section{More Details on Experiments}
\label{sec:More exp detail}
\subsection{System prompts}
We follow~\cite{taori2023stanford} to use a system prompt in the following box to build a supervised data set for fine-tuning.
\begin{scamwarningbox}
Below is an instruction that describes a task, paired with an input that provides further context. Write a response that appropriately completes the request. Instruction:{\textcolor{blue}{\textbf{instruction}}} Input:{\textcolor{blue}{\textbf{input}}} Response:{\textcolor{blue}{\textbf{response}}}
\end{scamwarningbox}
For our experiment, we construct the following triplet of Instruction/Input/Response:
\begin{scamwarningbox}
\textbf{The triplet of Instruction/Input/Response for ScienceQA dataset:}\\
Instruction: Choose the correct answer's letter,just like (A), (B), (C) or (D).\\
Input: $<$Corresponding input in ScienceQA dataset$>$(use (A), (B), (C), (D) as option labels)\\
Response: $<$Corresponding label in ScienceQA dataset$>$
\end{scamwarningbox}

\begin{scamwarningbox}
\textbf{The triplet of Instruction/Input/Response for WMDP dataset:}\\
Instruction: Choose the correct answer's letter,just like (A), (B), (C) or (D).\\
Input: $<$Corresponding input in WMDP dataset$>$(use (A), (B), (C), (D) as option labels)\\
Response: $<$Corresponding label in WMDP dataset$>$
\end{scamwarningbox}

\subsection{More details for baselines}
\label{baseline details}
\textbf{GA}: We follow the hyperparameter settings in the paper ~\cite{yao2024large}, set the learning rate of unlearn data to 0.1 and the learning rate of retain data to 1.\\
\textbf{EUL}: We follow the method described in the paper~\cite{chen2023unlearn}, plugging the unlearning layers into transformer layers after the feed-forward networks and using Low-Rank Adaptation to train the unlearning layers\\
\textbf{NPO}: We follow the hyperparameter settings in paper ~\cite{zhang2024negative} and selected the hyperparameter $\beta$ within the range of $0.01\sim1$. We found that $\beta=0.2$ performed better, so we chose $\beta=0.2$ as the experimental hyperparameter in the article\\
\textbf{RMU}: We follow the hyperparameter settings in paper~\cite{li2024wmdp} and select the hyperparameter $\alpha$ within the range of $10^{-4}\sim10$. We found that $\alpha=1$ yields better performance, and thus we choose $\alpha=1$ as the experimental hyperparameter in our paper.\\
\textbf{O3}: We followed the method described in the paper~\cite{gao2024large},  employing O-LoRA for model fine-tuning in continual unlearning, and training an OOD classifier for the ScienceQA dataset to make judgments.\\

\section{More Experiment Results}
\label{sec:More exp result}

\subsection{Performance of MRP on WMDP dataset}
\label{sec:WMDP exp}

\subsubsection{Set up}
\label{sec:WMDP exp set up}
In our WMDP dataset, we selected a total of 2,000 samples as the unlearning training set and 800 samples as the test set for the unlearning task. For the unlearning task specifically, we chose three subjects from the WMDP dataset - biology, chemistry, and cybersecurity - as the data for continuous unlearning. The experiments were conducted with three randomized sequences, and the reported results represent the mean values ± 0.5 standard deviations.

Regarding the retain dataset, since WMDP doesn't contain appropriate retain data, we continued using the language science and social science subjects from Science QA as retain data. All other experimental parameters remained identical to those used in the Science QA dataset experiments.

\subsubsection{Result}
\label{sec:WMDP exp result}
\begin{table}[h]
\centering
\setlength{\tabcolsep}{5pt} 
\caption{Performance scores of different unlearn methods on WMDP dataset}
\label{tab:unlearn_wmdp}
\begin{tabular}{l|llll}
\toprule
\textbf{Method} & \multicolumn{1}{c}{\text{$Score_{1}$}} & \multicolumn{1}{c}{\text{$Score_{2}$}} & \multicolumn{1}{c}{\text{$Score_{3}$}} & \multicolumn{1}{c}{\text{$Score_\text{all}$}}\\
\midrule
GA & 0.663 ± 0.033& 0.767 ± 0.040& 0.650 ± 0.063& 0.843 ± 0.069
\\
EUL & 0.849 ± 0.062& 0.521 ± 0.090& 0.297 ± 0.022& 0.790 ± 0.033
\\
NPO & 0.838 ± 0.053& 0.386 ± 0.050& 0.265 ± 0.114& 0.895 ± 0.077
\\
RMU & 0.892 ± 0.088& 0.406 ± 0.093& 0.264 ± 0.078& 0.872 ± 0.036
\\
O3 & 0.779 ± 0.022& 0.798 ± 0.051& 0.810 ± 0.035& 0.817 ± 0.055
\\
\midrule
Ours & \textbf{0.905 ± 0.020}& \textbf{0.872 ± 0.022}& \textbf{0.891 ± 0.047}& \textbf{0.902 ± 0.016}\\
\bottomrule
\end{tabular}
\end{table}

As shown in Table~\ref{tab:unlearn_wmdp}, our method maintains relatively high scores throughout the continuous unlearning process. Notably, after three consecutive unlearning operations, most baseline methods drop to scores of 0.65 or below, while our method consistently remains above 0.87. After three unlearning iterations, the difference between our method and the $Score_{\text{all}}$ is merely 0.011 points (0.891 vs. 0.902). These results demonstrate the broad effectiveness of our approach across the dataset.

\subsection{Performance of MRP on Qwen model}
\label{sec:Qwen exp}

\begin{table}[h]
\centering
\setlength{\tabcolsep}{5pt} 
\caption{Performance scores of different unlearn methods on qwen7b}
\label{tab:unlearn_qwen}
\begin{tabular}{l|lllll}
\toprule
\textbf{Method} & \multicolumn{1}{c}{\text{$Score_{1}$}} & \multicolumn{1}{c}{\text{$Score_{2}$}} & \multicolumn{1}{c}{\text{$Score_{3}$}} & \multicolumn{1}{c}{\text{$Score_{4}$}} & \multicolumn{1}{c}{\text{$Score_\text{all}$}}\\
\midrule
GA & 0.849 ± 0.018& 0.436 ± 0.019& 0.291 ± 0.037& 0.271 ± 0.094& 0.831 ± 0.010
\\
EUL & 0.916 ± 0.008& 0.837 ± 0.010& 0.654 ± 0.018& 0.569 ± 0.038& 0.925 ± 0.006
\\
NPO & 0.946 ± 0.016& 0.805 ± 0.007& 0.395 ± 0.015& 0.407 ± 0.007& 0.921 ± 0.006
\\
RMU & 0.945 ± 0.008& 0.739 ± 0.016& 0.326 ± 0.043& 0.429 ± 0.019& 0.953 ± 0.037
\\
O3 & 0.684 ± 0.014& 0.783 ± 0.025& 0.716 ± 0.018& 0.762 ± 0.013& 0.829 ± 0.026
\\
\midrule
Ours & \textbf{0.964 ± 0.003}& \textbf{0.940 ± 0.011}& \textbf{0.926 ± 0.003}& \textbf{0.939 ± 0.012}& \textbf{0.975 ± 0.014}
\\
\bottomrule
\end{tabular}
\end{table}

As shown in Table~\ref{tab:unlearn_qwen}, most baseline methods maintain relatively high scores when unlearning the first task. However, their performance drops significantly to 0.8 or below during the second unlearning operation. After unlearning four tasks, the scores of most methods decline to 0.57 or lower. In contrast, our method consistently maintains a score above 0.9 throughout the continuous unlearning process. Another approach, O3, demonstrates reasonable performance in the sequential unlearning scenario, but our method achieves superior results across all evaluation metrics.

When also comparing with $Score_\text{all}$ (model performance after simultaneously unlearning all four tasks), we observe that most baselines exhibit similar performance between $Score_{1}$ (single-task unlearning) and $Score_\text{all}$, suggesting that unified unlearning can indeed yield satisfactory results. However, their $Score_{4}$ scores (after four sequential unlearning operations) drop sharply, indicating that these baseline methods fundamentally struggle with sustained unlearning.

In contrast, our method achieves a $Score_{4}$ that differs from the $Score_\text{all}$ benchmark by only 0.037 (0.938 vs. 0.975), demonstrating its superior capability in maintaining model utility while performing continual knowledge removal.

This result indicates that our method is still effective in different models, proving the universality of our method.

\subsection{Unlearning without same retain dataset}
\label{sec:w.o. same retain}

\begin{figure}[h]
    \centering
    \setlength{\tabcolsep}{-5pt} 
    \begin{tabular}{cc}    
    \includegraphics[width=0.45\textwidth]{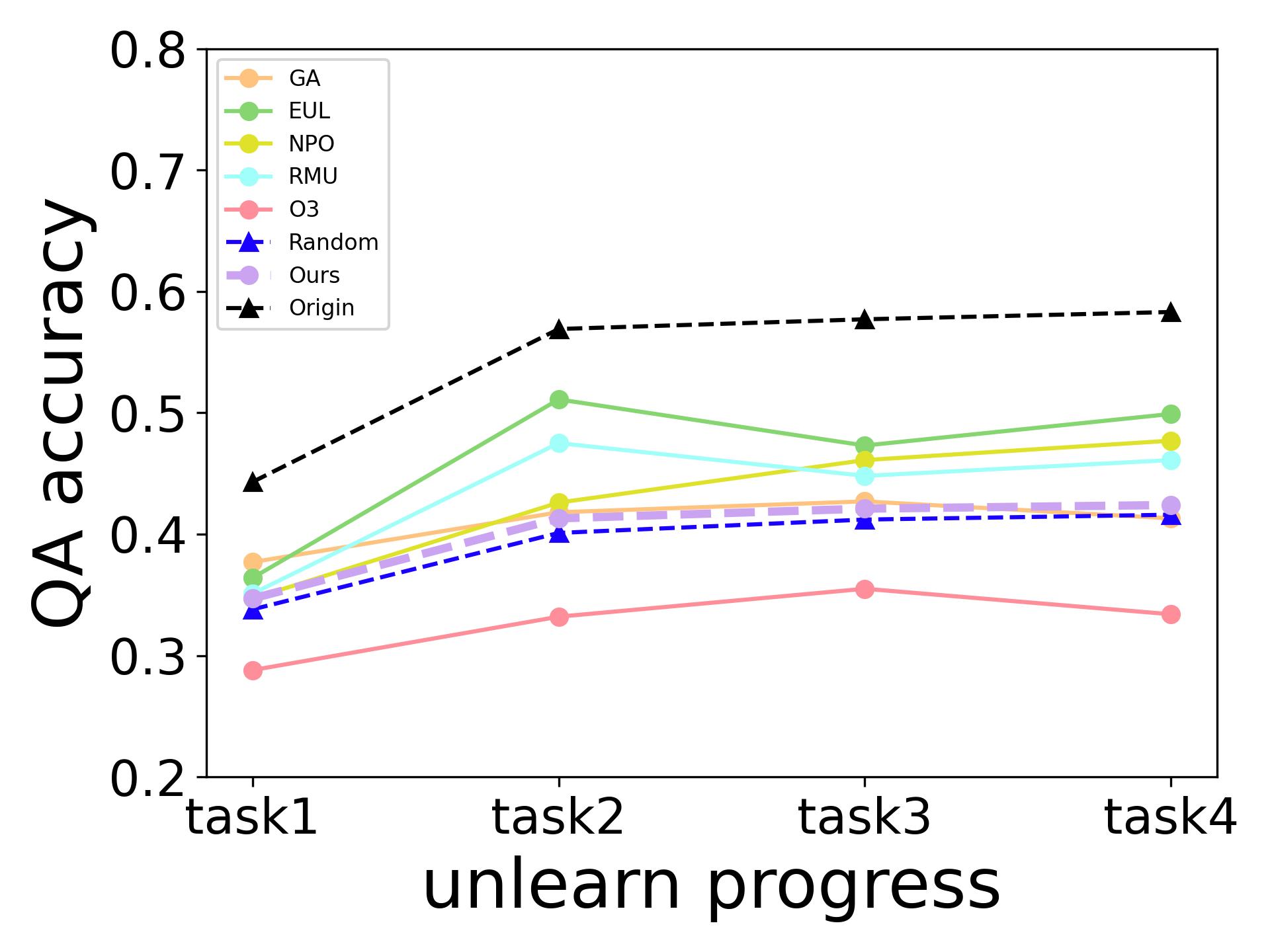}&
    \includegraphics[width=0.45\textwidth]{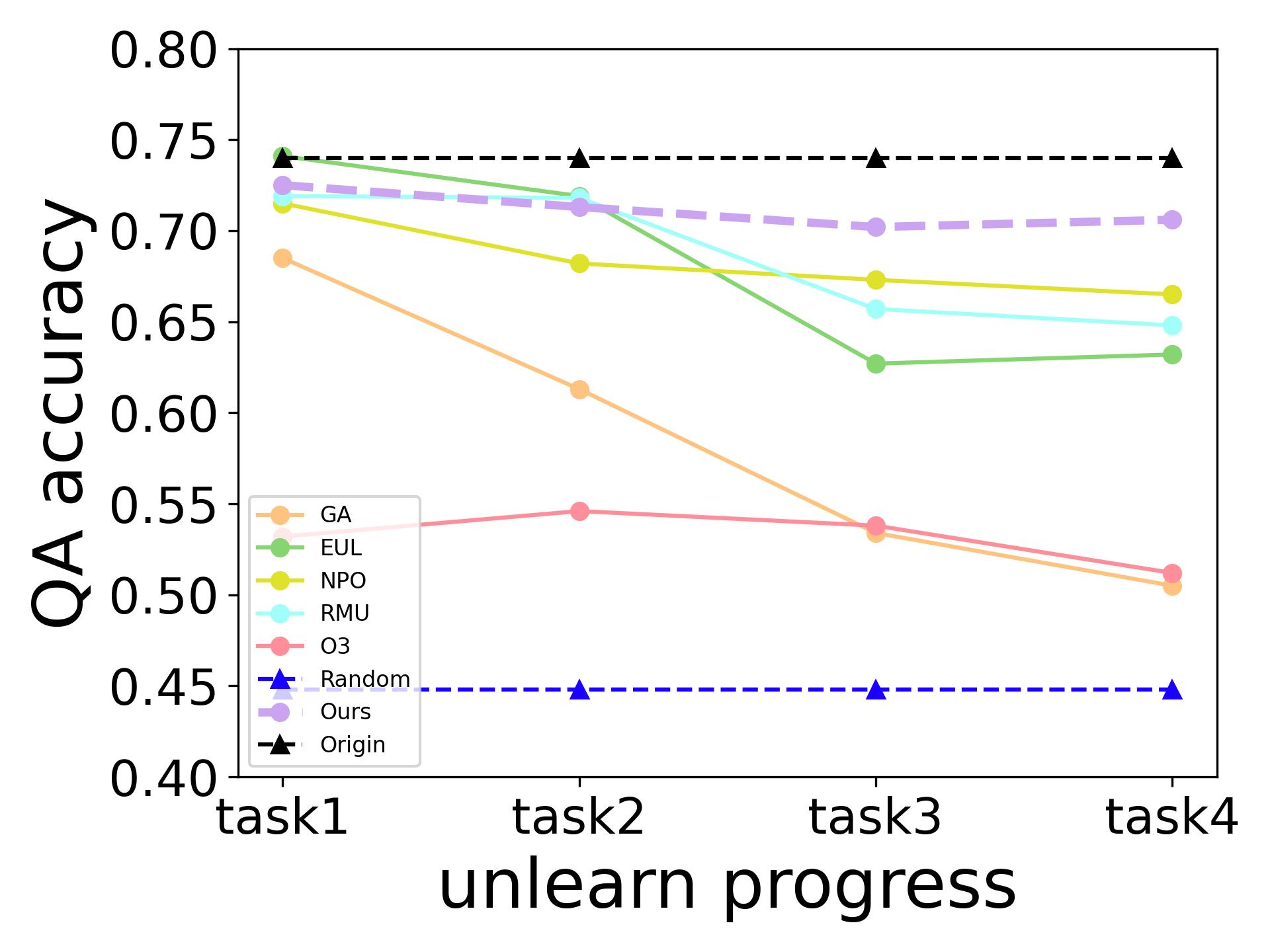} \\
    \makecell{(a) Unlearn Dataset \\ (natural science)} &
    \makecell{(b) Retain Dataset \\ (social science)} 
    \end{tabular}
    \caption{The performance of the model on the unlearn and retain datasets when only use language science dataset as the training retain dataset}
    \label{fig:retain1}
\end{figure}

\begin{figure}[h]
    \centering
    \setlength{\tabcolsep}{-5pt} 
    \begin{tabular}{cc}    
    \includegraphics[width=0.45\textwidth]{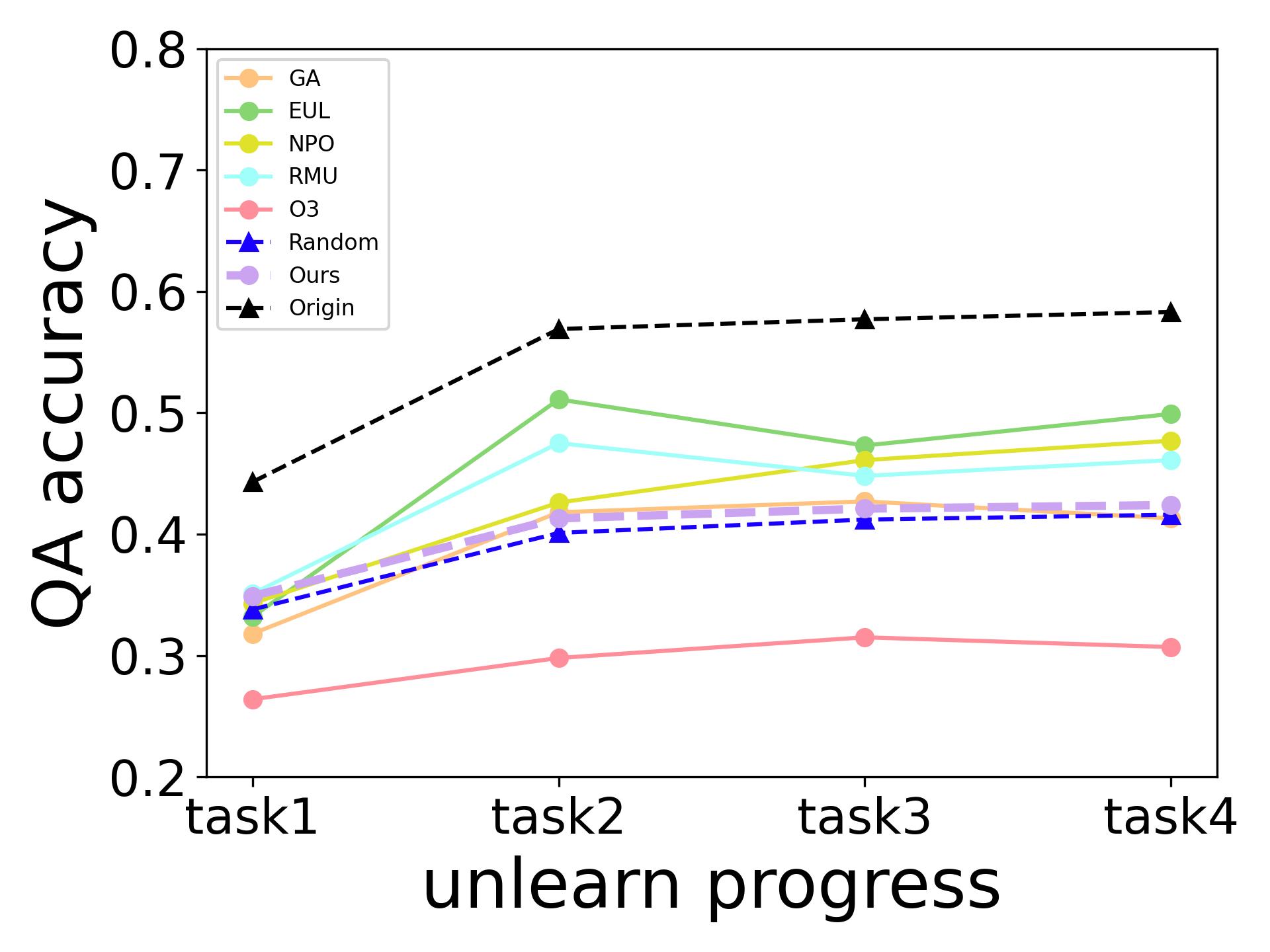}&
    \includegraphics[width=0.45\textwidth]{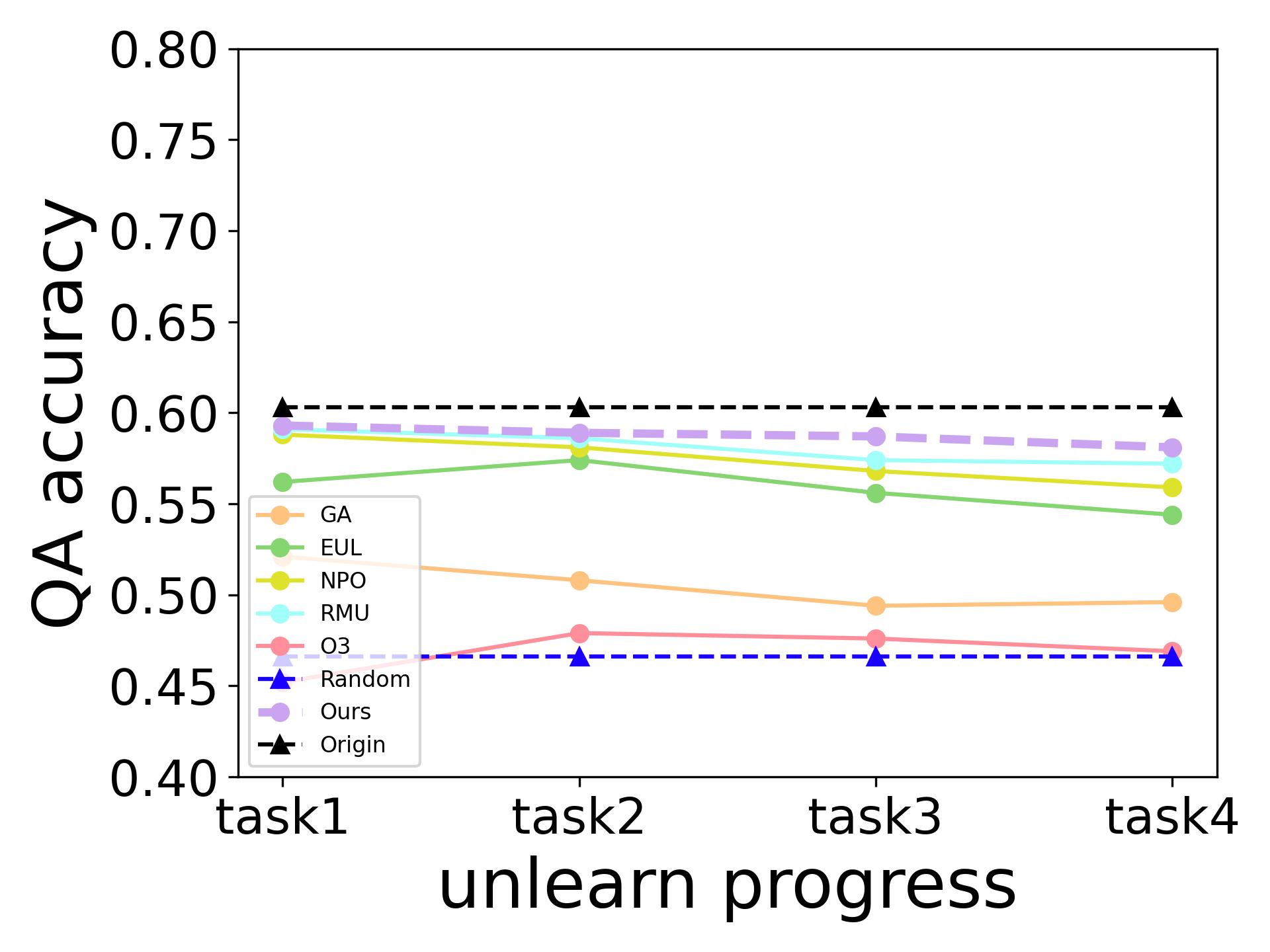} \\
    \makecell{(a) Unlearn Dataset \\ (natural science)} &
    \makecell{(b) Retain Dataset \\ (language science)} 
    \end{tabular}
    \caption{The performance of the model on the unlearn and retain datasets when only use social science dataset as the training retain dataset}
    \label{fig:retain2}
\end{figure}

In Tables~\ref{tab:unlearn_only_language} and~\ref{tab:unlearn_only_social}, we show overall scores of this experiment, beyond overall scores, we conducted detailed analysis of model performance on unlearn tasks and retain tasks.As demonstrated in Figure~\ref{fig:retain1} and Figure~\ref{fig:retain2}, our method consistently maintains a lower QA accuracy on the unlearn tasks across both experiments. Even after four epochs of unlearn requests, the QA accuracy remains around 0.4, while most other methods exhibit an increase to approximately 0.5. Notably, although O3 achieves even lower accuracy on the unlearn task, its performance falls below that of Random, indicating that O3's approach compromises the model's capability in handling multiple-choice questions. This observation is further corroborated by the performance on the retain datasets.

Our method demonstrates robust performance on the retain datasets in both experiments. On the social science dataset, the accuracy drops by less than 0.05 compared to the original model, whereas most other methods suffer a decline of around 0.15. Particularly, O3 experiences a significant drop of 0.20, highlighting its detrimental impact on the model's performance in other tasks.

These results strongly suggest that, even in the absence of the original retain dataset, our approach can effectively utilize similar datasets as substitutes to achieve competitive retain performance.

\subsection{hyperparameter experiments}
\label{sec:hyperparameter}

In this section, we conducted hyperparameter experiments on two critical parameters: the projection dimension and the number of projection layers. For the projection dimension, we tested values from 1 to 4 while maintaining a fixed learning rate of $2\times10^{-4}$ and 2 projection layers. For the projection layers, we also tested layer numbers from 1 to 4 while maintaining a fixed learning rate of $2\times10^{-4}$ and 2 projection layers.Other hyperparameters kept at default settings.

\begin{table}[h]
\centering
\setlength{\tabcolsep}{5pt} 
\caption{Performance scores of different projection dimensions}
\label{tab:dimension}
\begin{tabular}{llllll}
\toprule
\textbf{Dim} & \multicolumn{1}{c}{\text{$Score_{1}$}} & \multicolumn{1}{c}{\text{$Score_{2}$}} & \multicolumn{1}{c}{\text{$Score_{3}$}} & \multicolumn{1}{c}{\text{$Score_{4}$}} & \multicolumn{1}{c}{\text{$Score_\text{all}$}}\\
\midrule
1 & 0.445 ± 0.053& 0.669 ± 0.049& 0.603 ± 0.080& 0.591 ± 0.061& 0.454 ± 0.057
\\
2 & 0.950 ± 0.021& \textbf{0.878 ± 0.027}& \textbf{0.896 ± 0.027}& \textbf{0.905 ± 0.079}& 0.988 ± 0.074
\\
3 & 1.113 ± 0.026& 0.876 ± 0.012& 0.863 ± 0.033& 0.901 ± 0.082& 0.982 ± 0.033
\\
4 & \textbf{1.223 ± 0.017}& 0.853 ± 0.014& 0.855 ± 0.031& 0.884 ± 0.021& \textbf{1.016 ± 0.035}
\\
\bottomrule
\end{tabular}
\end{table}

\begin{table}[h]
\centering
\caption{Performance scores of different number of projection layers}
\label{tab:layer}
\setlength{\tabcolsep}{5pt} 
\begin{tabular}{llllll}
\toprule
\textbf{Num} & \multicolumn{1}{c}{\text{$Score_{1}$}} & \multicolumn{1}{c}{\text{$Score_{2}$}} & \multicolumn{1}{c}{\text{$Score_{3}$}} & \multicolumn{1}{c}{\text{$Score_{4}$}} & \multicolumn{1}{c}{\text{$Score_\text{all}$}}\\
\midrule
1 & 0.480 ± 0.084& 0.863 ± 0.016& 0.838 ± 0.065& 0.890 ± 0.128& 0.897 ± 0.059
\\
2 & \textbf{0.950 ± 0.019}& \textbf{0.878 ± 0.018}& \textbf{0.896 ± 0.024}& \textbf{0.905 ± 0.046}& \textbf{0.988 ± 0.103}
\\
3 & 0.668 ± 0.025& 0.804 ± 0.008& 0.859 ± 0.004& 0.860 ± 0.065& 0.962 ± 0.068
\\
4 & 0.755 ± 0.020& 0.835 ± 0.019& 0.881 ± 0.023& 0.855 ± 0.010& 0.944 ± 0.021
\\
\bottomrule
\end{tabular}
\end{table}

\subsubsection{Projection Dimension Analysis}
As shown in Table ~\ref{tab:dimension}, when the dimension equals 1, the model achieves relatively low performance (score $\approx$ 0.5), indicating insufficient capacity for effective unlearning. Higher dimensions yield significantly better results (scores $\approx$ 0.9), demonstrating improved unlearning effectiveness. Notably, the orthogonal constraint between projection directions and retain data preserves the retain dataset accuracy despite increasing dimensions.

\subsubsection{Projection Layer Analysis}
 Table ~\ref{tab:layer} reveals optimal performance at 2 projection layers (final score: 0.905). Both insufficient and excessive layers degrade performance - the former limits unlearning effectiveness while the latter may interfere with retain dataset performance. This suggests careful layer count selection is crucial.

\section{Limitations and Future Work}
\label{sec:limitations}
Although our method performs robustly across a range of projection layer selections, specific layers can still be chosen for optimal results depending on the target model. Additionally, since this is the first application of projection-based methods to LLM unlearning, we have only validated MRP on smaller 7B language models. Its scalability to larger models or other types of models beyond LLMs (such as multimodal models) remains to be thoroughly investigated. Furthermore, the concept of projection can be widely applied to other safety-related domains, such as eliminating discriminatory or harmful information. Research in this direction will be a focus of our future work.

\end{document}